\newtheorem{proposition}{Proposition}
\newtheorem{lemma}{Lemma}
\newacronym{ds}{ds}{Deep Set}
\newacronym{st}{st}{Set Transformer}
\newacronym{nc}{NC}{Neural Complexity}
\newacronym{ns}{NS}{Neural Statistician}
\newacronym{np}{NP}{Neural Process}
\newcommand{\calH}{{\mathcal{H}}}
\newcommand{\calL}{{\mathcal{L}}}
\newcommand{\calX}{{\mathcal{X}}}
\newcommand{\calY}{{\mathcal{Y}}}
\newcommand{\calZ}{{\mathcal{Z}}}
\newcommand{\Real}{\mathbb R}
\newcommand{\Prob}{\mathbb P}
\newcommand{\E}{\mathbb E}
\newcommand{\defeq}{\stackrel{\mathrm{def}}{=}}
\newcommand{\bracket}[1]{\left[ #1 \right]}
\DeclarePairedDelimiterX{\infdivx}[2]{(}{)}{#1\;\delimsize\|\;#2}
\renewrobustcmd{\bfseries}{\fontseries{b}\selectfont}
\renewrobustcmd{\boldmath}{}
\newrobustcmd{\BF}{\bfseries}
\newcommand{\msd}[2]{{#1}{\tiny $\pm${#2}}}
\newcommand{\Ltrue}{\calL_T}
\newcommand{\Lemp}{\widehat{\calL}_{T,S}}
\newcommand{\Lnc}{\calL_{NC}}
\newcommand{\gap}{G_{T,S}}
\title{Neural Complexity Measures}
\author{%
  Yoonho Lee$^{1}$,
  Juho Lee$^{1, 2}$,
  Sung Ju Hwang$^{1, 2}$,
  Eunho Yang$^{1, 2}$,
  Seungjin Choi$^{3}$\\
  AITRICS$^1$, Seoul, South Korea, KAIST$^2$, Daejeon, South Korea, BARO AI$^3$, Seoul, South Korea \\
  \texttt{eddy@aitrics.com} \\
}
\begin{document}
\maketitle

\begin{abstract}
    While various complexity measures for deep neural networks exist, specifying an appropriate measure capable of predicting and explaining generalization in deep networks has proven challenging. We propose \textit{Neural Complexity} (NC), a meta-learning framework for predicting generalization. Our model learns a scalar complexity measure through interactions with many heterogeneous tasks in a data-driven way. The trained NC model can be added to the standard training loss to regularize any task learner in a standard supervised learning scenario. We contrast NC's approach against existing manually-designed complexity measures and other meta-learning models, and we validate NC's performance on multiple regression and classification tasks.
\end{abstract}

\section{Introduction}
\label{sec:intro}
Deep neural networks have achieved excellent performance on numerous tasks, including image classification \citep{krizhevsky2012imagenet} and board games \citep{silver2017mastering}. Although they achieve superior empirical performance, why and how these models generalize remains a mystery. Thus, understanding which properties of deep networks allow them to generalize an important problem with far-reaching potential benefits such as principled model design and safety-aware models. To explain why deep networks generalize in practice, recent works have proposed novel \textit{complexity measures} for deep networks \citep{jiang2019fantastic,keskar2016large,liang2017fisher,DBLP:journals/corr/abs-1901-01672}.
Such measures quantify the complexity of the function that a neural network represents.
Ideally, such complexity measures should be good predictors of the degree of generalization of a network.
However, in practice, such manually-designed complexity measures have failed to capture essential properties of generalization in deep networks, such as improving with network size and worsening with label noise.

To overcome such limitations, we propose an alternative data-driven approach for constructing a complexity measure.
Our model, \gls{nc}, meta-learns a neural network that takes a predictor as input and outputs a scalar.
Similarly to previous complexity-based generalization bounds, we provide a probabilistic bound of the true loss using \gls{nc}. 
Our bound has very different characteristics from previous generalization bounds:
it depends on both data distribution and architecture, and more importantly, becomes tighter as the \gls{nc} model improves. 

Experimentally, we show that a learned \gls{nc} model consistently accelerates training in addition to preventing overfitting.
We also show the degree to which the measure learned by \gls{nc} transfers to different hypothesis classes, such as using a different network architecture, learning rate, or nonlinearity for the task learner.
Compared to other recent meta-learning methods \citep{wu2018understanding}, the meta-learned knowledge in \gls{nc} is much more stable across long learning trajectories.
Finally, while most meta-learning works focus on improving performance on small tasks such as few-shot classification, we show that \gls{nc} is also capable of regularizing learning in single large tasks.
\section{Problem Setup}
\label{section:formulation}

We adopt a meta-learning problem formulation in which a model (the "meta-learner") facilitates learning in new tasks using previous experience learning in other related tasks.
Specifically, we assume all tasks share sample space $\calZ$, hypothesis space $\calH$, and loss function $\calL: \calH \times \calZ \rightarrow \Real$. 
Each task $T$ consists of i.i.d. sampled finite training set $S = \{ z_1, \ldots, z_m \}$ from the underlying hidden distribution $D_T$ over the sample space $\calZ$ associated with task $T$.
The \textit{true loss} $\Ltrue$ and \textit{empirical loss} $\Lemp$ for each task $T$ are respectively defined as
\begin{align}
    \Ltrue(h) \defeq \E_{z \sim D_T} \bracket{\calL(h, z)} \quad \text{and} \quad
    \Lemp(h) \defeq \frac{1}{m} \sum_{z \in S} \calL(h, z).
\end{align}
Tasks themselves are sampled i.i.d. from a distribution of tasks: $T \sim \tau$. 
The objective of our meta-learner is to predict the difference between the true and empirical losses, otherwise known as the \textit{generalization gap} $\gap$:
\begin{align}
    \label{eq:gap_def}
    \gap(h) = \Ltrue(h) - \Lemp(h).
\end{align}
In other words, our model meta-learns a mapping $\calH \rightarrow \Real$ which mimics $h \mapsto \gap(h)$ by observing $\Ltrue(h)$ and $\Lemp(h)$ in many different tasks that follow $T \sim \tau$.

Even in the usual single-task supervised learning setup, we can still use this problem formulation to meta-learn by constructing a set of tasks in the following way.
Given one large dataset $S = \{ z_1, \ldots, z_M \}$, we randomly split $S$ into disjoint training and validation sets. 
For each task with this random split, the task learner uses the train set to train $h$, and the meta-learner evaluates $\Ltrue$ computed with the validation set as its target.
After training a meta-learner on this simulated set of tasks, we use the same model to estimate the gap $\gap$ of the full dataset $S$.
This task-splitting scheme is similar to traditional cross-validation. However, instead of choosing among a few hyperparameters, we meta-learn a neural network to mimic the complex mapping $h \mapsto \gap(h)$.
\section{Neural Complexity}
\glsreset{nc}
\begin{figure}[t]
\begin{subfigure}{.35\textwidth}
    \centering \includegraphics[width=\linewidth]{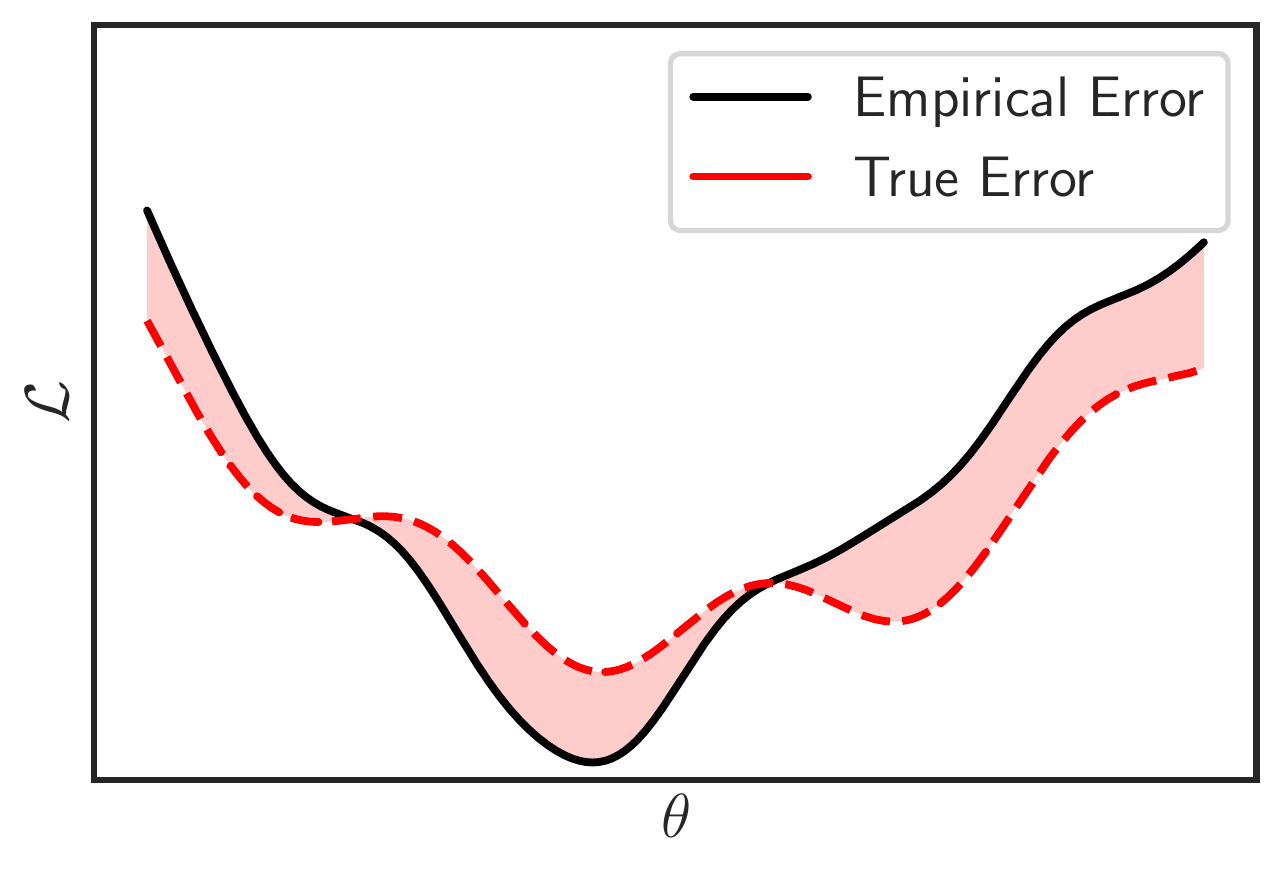}  
\end{subfigure}
\hfill 
\begin{subfigure}{.6\textwidth}
\centering\includegraphics[width=\linewidth]{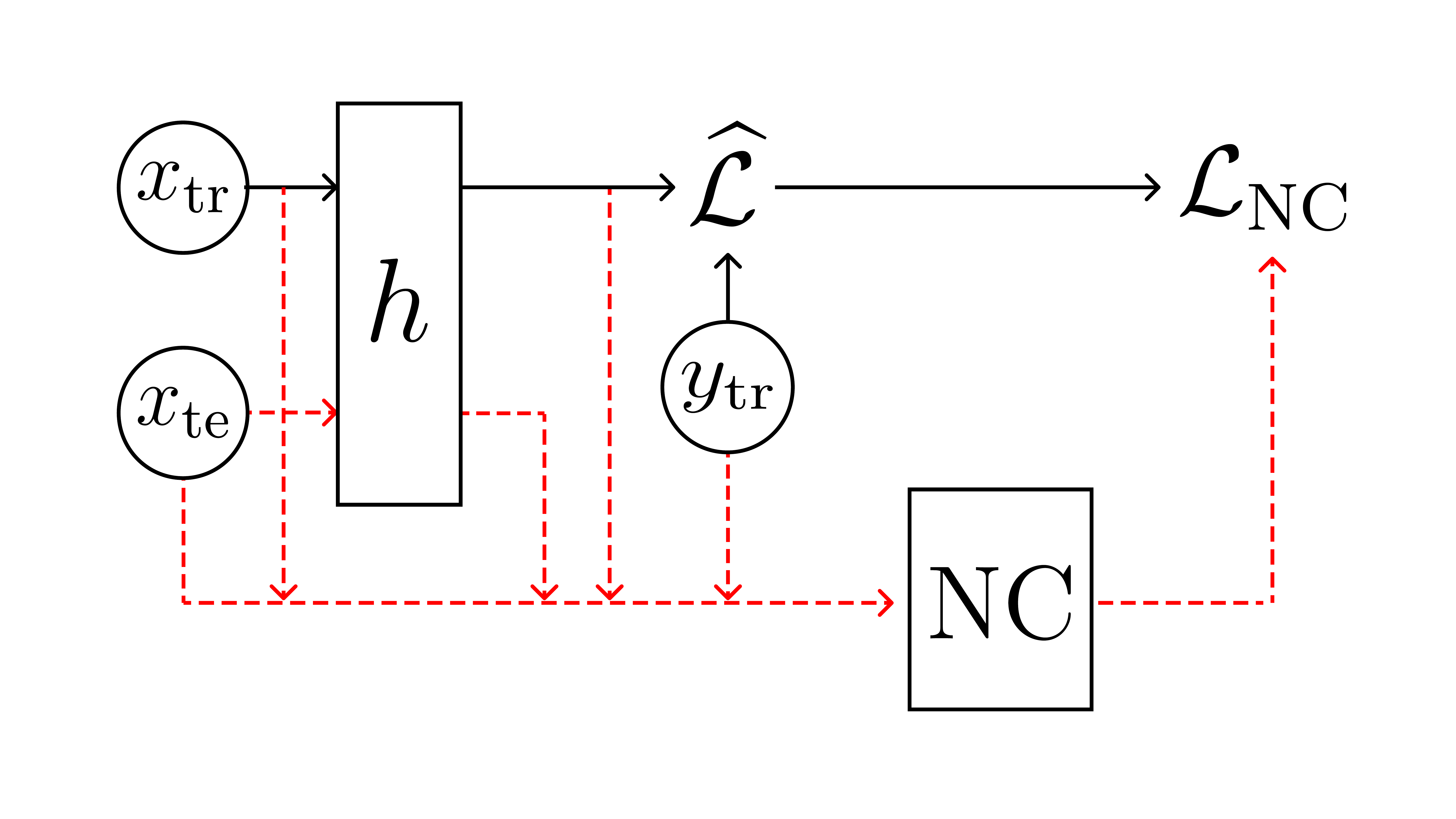}
\end{subfigure}
\vspace{-15pt}
\caption{
    (left) The true and empirical losses are correlated but different.
    \gls{nc} estimates their difference (colored).
    (right) The training loss $\widehat{\mathcal{L}}$ is regularized (solid lines) by the output of the trained \gls{nc} model (dotted lines).
    NC is meta-learned so that $\Lnc$ mimics the test loss.
    }
\label{fig:overview}
\end{figure}
\label{sec:formulation}
We now describe \glsentryfull{nc}, a meta-learning framework for predicting generalization.
At the core of \gls{nc} is a neural network that directly meta-learns a complexity measure through interactions with many tasks.
We show how this network integrates with any standard task learner in \cref{fig:overview}.
We show \gls{nc}'s training loop in \cref{fig:concept} and also provide a detailed description in \cref{alg:task_learning_with_nc,alg:meta_learning_nc}.

\begin{algorithm}
\caption{Task Learning}
\label{alg:task_learning_with_nc}
\begin{algorithmic}
    \REQUIRE \gls{nc}, Train and test datasets \\
    Randomly initialize parameters $\theta$ of learner $h$
    \LOOP 
    \STATE Sample minibatch $X_\mathrm{tr}, X_\mathrm{te}, Y_\mathrm{tr}$
    \STATE $
    \calL_\textrm{reg} \leftarrow \Lemp(h) 
        + \lambda \cdot \mathrm{NC}(X_\mathrm{tr}, X_\mathrm{te}, Y_\mathrm{tr}, h(X_\mathrm{tr}), h(X_\mathrm{te}))$
        \hfill \gls{nc}-regularized task loss \eqref{eq:task_SGD}
    \STATE $\theta \leftarrow \theta - \nabla_\theta \calL_\textrm{reg}$
        \hfill Gradient step
    \ENDLOOP
    \STATE $\gap(h) \leftarrow \Ltrue(h) - \Lemp(h)$
        \hfill Compute gap \eqref{eq:gap_def}
    \RETURN Snapshot $H = (X_\mathrm{tr}, X_\mathrm{te}, Y_\mathrm{tr}, h(X_\mathrm{tr}), h(X_\mathrm{te}), \gap(h))$
        \hfill Save to memory bank
\end{algorithmic}
\end{algorithm}%
\begin{algorithm}
\caption{Meta-Learning}
\label{alg:meta_learning_nc}
\begin{algorithmic}
    \REQUIRE Memory bank
    \STATE Randomly initialize parameters $\phi$ of $\mathrm{NC}$
    \WHILE{not converged}
    \STATE Sample $X_\mathrm{tr}, X_\mathrm{te}, Y_\mathrm{tr}, h(X_\mathrm{tr}), h(X_\mathrm{te}), G_{T, S}(h)$ from memory bank
    \STATE $\Delta \leftarrow G_{T, S}(h) - \mathrm{NC}(X_\mathrm{tr}, X_\mathrm{te}, Y_\mathrm{tr}, h(X_\mathrm{tr}), h(X_\mathrm{te}))$
    \STATE $\phi \leftarrow \phi - \nabla_\phi \calL_\mathrm{NC}(\Delta)$
        \hfill NC's loss function \eqref{eq:nc_loss}
    \ENDWHILE
\end{algorithmic}
\end{algorithm}
\subsection{Motivation: From Gap Estimate to Generalization Bound}
\label{subsec:bound}
We motivate our meta-learning objective through a simple method of extending the identity $\Lemp + \gap = \Ltrue$ to a probabilistic bound of $\Ltrue$ using any estimator of the gap $\gap$.

\begin{proposition}
\label{thm:nc_gen}
Let $D_\calH$ be a distribution of hypotheses,
and let $f: \calZ^m \times \calH \rightarrow \Real$ be any function of the training set and hypothesis.
Let $D_\Delta$ denote the distribution of $\gap(h) - f(S, h)$ where $h \sim D_\calH$,
and let $\Delta_1, \ldots, \Delta_n$ be i.i.d. copies of $D_\Delta$.
The following holds for all $\epsilon > 0$:
\begin{align} \label{eq:NC_thm}
  \Prob \left[ 
  \left| \Ltrue(h) - \Lemp(h) \right|  \leq f(S, h) + \epsilon
  \right]
  \geq 1 - \frac{ \left| \{i | \Delta_i > \epsilon \} \right| }{n} 
      - 2\sqrt{\frac{\log \frac{2}{\delta}}{2n}}.
\end{align}
\end{proposition}

We defer the proof to the supplementary material.
First, note that the role of $f$ in this bound mirrors that of complexity measures in previous generalization bounds.
Since we can compute $f$ given $S$ and $h$, we can restate \cref{thm:nc_gen} as stating that the regularized loss $\Lemp(h) + f(S, h)$ differs from $\Ltrue$ by at most $\epsilon$ (with the given probability).
Furthermore, making $f$ more accurately predict $\gap$ tightens this bound by decreasing the 
$\frac{ \left| \{i | \Delta_i > \epsilon \} \right| }{n}$
term.

Taking motivation from this result, our \gls{nc} meta-learns such a function $f$ by regressing towards the gap $\gap$.
Rather than designing a measure of complexity that yields a tight generalization bound, \gls{nc} directly learns such a measure in a data-driven way by posing the tightening of the bound as an optimization problem.

\subsection{Training}
\label{subsec:train}
We first illustrate \gls{nc}'s training loop.
Recall from \cref{sec:formulation} that we consider a meta-learning setup consisting of a set of related but different tasks.
Given a task $T$ with dataset $S$, the task learner minimizes the following regularized loss using stochastic gradient descent:
\begin{align}
\label{eq:task_SGD}
\calL_\mathrm{reg}(h) = \Lemp(h) + \lambda \cdot \mathrm{NC}(h).
\end{align}
We set $\lambda=0$ at initialization, and use a linear schedule where $\lambda=1$ after a certain number of episodes.
We calculate the two terms in $\calL_\mathrm{reg}(h)$ using minibatches, just as in regular supervised learning with neural networks.
Further note that one can use \gls{nc} regularization alongside any of the usual tricks for training, such as data augmentation or batch normalization.

The objective of \gls{nc} is to estimate the difference between $\Ltrue$ and $\Lemp$ for any of the hypotheses $h_0^T, \ldots, h_N^T$, for any task $T \sim \tau$.
\gls{nc} is a permutation-invariant neural network that takes features of the function $h$ and minibatches of the data as input and outputs a scalar.
We train \gls{nc} using the Huber loss \citep{huber1992robust} with target $\gap$:
\begin{align}
\label{eq:nc_loss}
\calL_\mathrm{NC}(\Delta)=
\left\{\begin{array}{ll}
    \frac{1}{2} \Delta^{2} 
        & \text{for } \Delta \leq 1 \\
    |\Delta|-\frac{1}{2}
        & \text{otherwise}
\end{array}\right.,
\end{align}
where $\Delta = \gap(h) - \mathrm{NC}(h)$.
We found that the Huber loss was more stable than the standard MSE loss, likely because the scale of $\gap$ can vary widely depending on $h$.

\subsection{Architecture}
\label{subsec:arch}

We now describe the architecture of \gls{nc} used in our experiments.
We have mentioned in \cref{thm:nc_gen,eq:task_SGD} that \gls{nc} takes a representation of the function $h$ as input.
We accomplish this by passing both data $x$ and predictions $h(x)$ to \gls{nc}.
In terms of the function $h$, the tuple $(x, h(x))$ can be written as $\delta_x[(\mathrm{id}, h)]$ where $\delta_x$ is the evaluation functional at $x$ and $\mathrm{id}$ is the identity function.
This representation of $h$ captures the behavior of $h$ at the datapoints considered during training and evaluation.
In our experiments, this structure sufficed for extracting relevant features of $h$.

\paragraph{Regression}
We first describe \gls{nc} when each task $T$ is a regression task with vector data ($x \in \Real^D$).
Let 
$X_\mathrm{tr} \in \Real^{m \times D}$, 
$X_\mathrm{te} \in \Real^{m' \times D}$, 
$Y_\mathrm{tr} \in \Real^{m \times 1}$ 
denote train data, test data, and train labels, respectively.
The learner's hypothesis $h$ produces outputs 
$h(X_\mathrm{tr}) \in \Real^{m \times 1}$ and 
$h(X_\mathrm{te}) \in \Real^{m' \times 1}$ 
for train and test data.
\gls{nc} first embeds all data with a shared encoding network $f_\mathrm{enc}$, which is an MLP that operates row-wise on these matrices:
\begin{align}
    f_\mathrm{enc}(X_\mathrm{tr}) = e_\mathrm{tr} \in \Real^{m \times d}, \quad
    f_\mathrm{enc}(X_\mathrm{te}) = e_\mathrm{te} \in \Real^{m' \times d}.
\end{align}
These embeddings are fed into a multi-head attention layer \citep{vaswani2017attention} where queries, keys, and values are 
$Q=e_\mathrm{te}$, 
$K=e_\mathrm{tr}$,
$V=[e_\mathrm{tr}, y_\mathrm{tr}] (\in \Real^{m \times (d+1)})$,
respectively.
The output of this attention layer is a set of $m'$ items, each corresponding to a test datapoint:
\begin{align}
    \label{eq:MHA}
    f_\mathrm{att}(Q, K, V)
    = e_\mathrm{att} \in \Real^{m' \times d}.
\end{align}
Finally, these embeddings are passed through a decoding MLP network and averaged:
\begin{align}
    \mathrm{NC}(X_\mathrm{tr}, X_\mathrm{te}, Y_\mathrm{tr}, h(X_\mathrm{tr}), h(X_\mathrm{te}))
    = \frac{1}{m'} \sum_{i=1}^{m'} f_\mathrm{dec}(e_\mathrm{att})_i \in \Real.
\end{align}
Note that \gls{nc} is permutation invariant because it is consists of permutation invariant components.
This property is essential since \gls{nc}'s objective is also invariant with respect to permutation of the input dataset.

\paragraph{Classification}
\begin{figure}
    \centering
    \includegraphics[width=\linewidth]{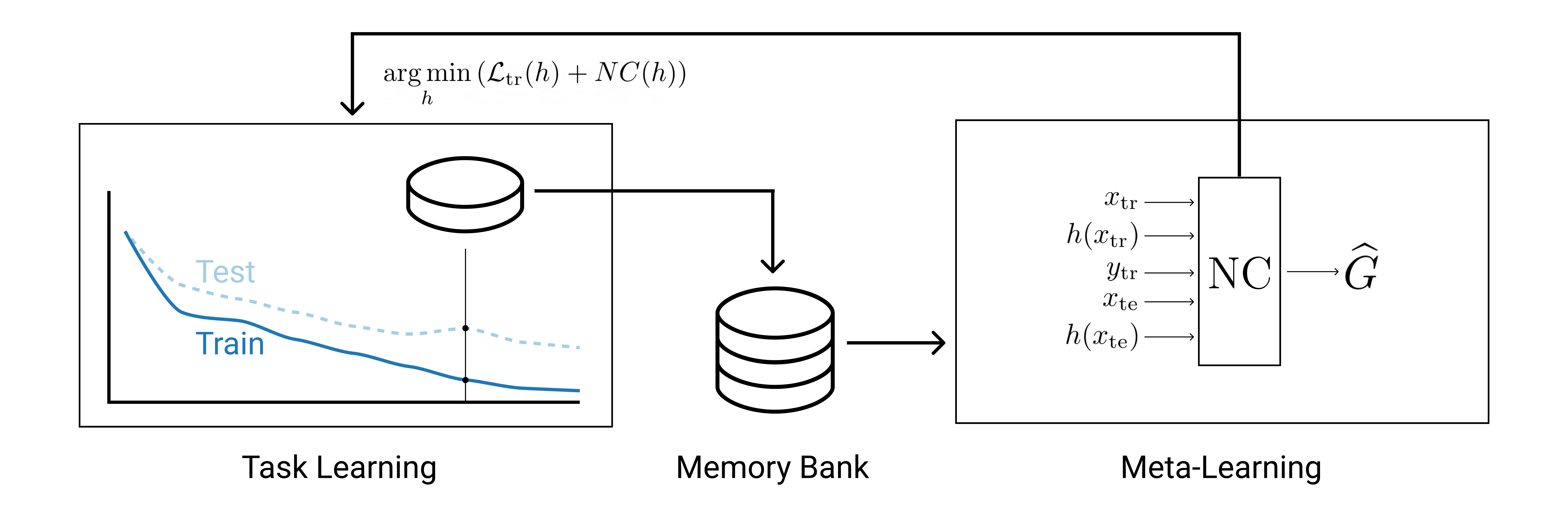}
    \caption{
        \gls{nc} regularizes task learning.
        We store snapshots of task learning in a \textit{memory bank},
        from which we uniformly sample batches to train \gls{nc}.
        }
    \label{fig:concept}
\end{figure}
The architecture of \gls{nc} for classification tasks is identical to that of regression, except for the following additional interaction layer used to compute $V$.
Representing labels as one-hot vectors in a classification task with $c$ classes gives $Y_\mathrm{tr} \in \Real^{m' \times c}$.
Instead of concatenating $e_\mathrm{tr}$ and $Y_\mathrm{tr}$ as in \eqref{eq:MHA},
we use a bilinear layer to produce $V$:
\begin{align}
    \label{eq:bilin}
    V = \mathbb{W}(e_\mathrm{tr}, [Y_\mathrm{tr}, \mathbf{1}, \calL(X_\mathrm{tr})])) \in \Real^{m' \times d} \quad
    (\mathbb{W} \in \Real^{d \times d \times (c+2)}).
\end{align}
Note that we concatenate a vector of ones and the train loss to $Y_\mathrm{tr}$ before passing into the bilinear layer:
this vector acts like a residual connection for the embedding $e_\mathrm{tr}$, allowing its information to freely flow to the next layer.
See \cref{sec:experiments} for an ablation study on each of our architectural choices.
The bilinear layer (\cref{eq:bilin}) generalizes the interaction layer proposed in \cite{xu2019metafun}: while they explicitly choose a subnetwork to use according to class, \eqref{eq:bilin} implicitly multiplies $0$ to all but one of the $c$ weights in each slice of the last dimension.
Additionally, to scale \gls{nc} up to high-dimensional image data such as the CIFAR dataset, we use a convolutional neural network for the encoder $f_\mathrm{enc}$.

Because training runs \eqref{eq:task_SGD} are time-consuming for large networks $h$, we use a \textit{memory bank} to store and re-use the information necessary for the meta-learning loss \eqref{eq:nc_loss}.
Specifically, we store tuples 
$(X_\mathrm{tr}, X_\mathrm{te}, Y_\mathrm{tr}, h(X_\mathrm{tr}), h(X_\mathrm{te}))$ 
along with the observed gap $\Lemp(h) - \Ltrue(h)$.
This memory bank has manageable memory cost because we can store only the indices for $X_\mathrm{tr}, X_\mathrm{te}$, and the other tensors have low dimensions.
We randomly sample minibatches of such tuples to train \gls{nc} with the meta-learning loss \eqref{eq:nc_loss}.
\cref{fig:concept} shows how the memory bank interacts with \gls{nc}.

\subsection{Interpretations}
\label{subsec:interpretations}
We provide several different interpretations the \gls{nc} framework.

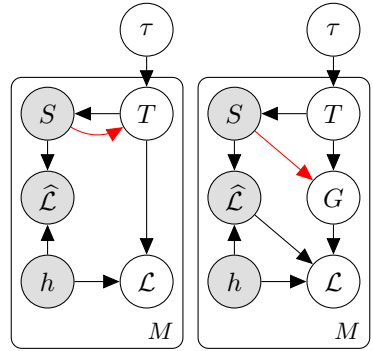
\begin{wrapfigure}{R}{0.35\textwidth} \centering
\begin{subfigure}[b]{.49\linewidth} \centering
\begin{tikzpicture}[x=.6cm,y=.4cm]
  \node[latent]                         
  (tau) {$\tau$};
  \node[latent, below=of tau]                         
  (T) {$T$};
  \node[obs, left=of T]
  (S) {$S$};
  \node[obs, below=of S]                         
  (Lemp) {$\widehat{\calL}$};
  \node[obs, below=of Lemp]                         
  (h) {$h$};
  \node[latent, right=of h]
  (Lgen) {$\calL$};

  \edge {tau} {T} ; %
  \edge {T} {S, Lgen} ; %
  \edge {S} {Lemp} ; %
  \edge {h} {Lemp, Lgen} ; %
  \draw [red, ->] (S) to [bend right] (T);

  \plate {} {(T) (S) (Lemp) (Lgen) (h)} {$M$} ;
\end{tikzpicture}
\end{subfigure}
\begin{subfigure}[b]{.49\linewidth}
\centering
\begin{tikzpicture}[x=.6cm,y=.4cm]
  \node[latent]                         
  (tau) {$\tau$};
  \node[latent, below=of tau]                         
  (T) {$T$};
  \node[latent, below=of T]                         
  (G) {$G$};
  \node[obs, left=of T]
  (S) {$S$};
  \node[obs, below=of S]                         
  (Lemp) {$\widehat{\calL}$};
  \node[latent, below=of G]
  (Lgen) {$\calL$};
  \node[obs, below=of Lemp]                         
  (h) {$h$};

  \edge {tau} {T} ; %
  \edge {T} {S, G} ; %
  \edge {G, Lemp} {Lgen} ; %
  \edge {S} {Lemp} ; %
  \edge {h} {Lemp, Lgen} ; %
  \draw [red, ->] (S) to (G);

  \plate {} {(T) (S) (Lemp) (Lgen) (h)} {$M$} ;
\end{tikzpicture}
\end{subfigure}
\caption{
  Graphical models corresponding to (left) Neural Processes and (right) \gls*{nc}.
  Observed nodes are shaded and red arrows denote amortized inference.
}
\label{fig:plate_vertical}
\vspace{-30pt}
\end{wrapfigure}

\paragraph{Causality in Generalization}
As noted in \cite{jiang2019fantastic}, the correlation between a complexity measure and generalization does not directly imply a causal relationship.
Since regularizing the complexity measure implicitly assumes that reducing the measure will cause the model to generalize, this lack of a causal connection can be problematic.
\gls{nc}'s framework provides an alternative way around this issue:
because we keep using \gls{nc} as a regularizer, it continually gets feedback on whether its predictions have caused the task learner to generalize.

\paragraph{Meta-learned Complexity Measure}
As mentioned in \cref{sec:intro}, many recent works attempt to understand generalization in deep networks by proposing novel complexity measures.
Such measures are designed to correlate well with generalization while being directly computable for any given set of parameters.
\gls{nc} can be seen as a meta-learned complexity measure, and its target \eqref{eq:gap_def} is the generalization gap.
Instead of hand-designing an appropriate complexity measure, \gls{nc} meta-learns it by regressing towards observed degrees of generalization.

\paragraph{Optimal Regularizer}
A standard approach to generalization is to augment the empirical loss $\Lemp$ by adding a regularization term $\lambda$: $\calL_\mathrm{reg} = \Lemp(h) + \lambda(h) $.
Since the purpose of $\lambda$ is to make the regularized loss $\calL_\mathrm{reg}$ close to the true loss $\Ltrue$, 
we argue that the \textit{optimal regularizer} for task $T$ is the function that makes $\calL_\mathrm{reg}(h) = \Ltrue(h)$ for all $h$.
This unique "optimal regularizer" is exactly $\gap$;
therefore, \gls{nc} can be seen as a learned approximation to this optimal regularizer.

\paragraph{Neural Processes and Sufficient Statistics of True Loss}
We contrast the graphical models of \gls{nc} with the 
\gls{np} \citep{garnelo2018neural} in \cref{fig:plate_vertical}.
Both approaches involve a single meta-learner which observes multiple tasks to achieve low test loss.
The two approaches infer different sufficient statistics for the true loss $\Ltrue$.
\gls{np} infers the data distribution of $T$ and \gls{nc} infers the gap $G$.
While both $G$ and the data distribution are sufficient for reconstructing $\Ltrue$, $G$ has much lower dimension:
$G(h) \in \Real$, whereas $T$ is a complex distribution over 
$\calX \times \calY = \Real^{d_X + d_Y}$.

\paragraph{Actor-Critic}
Generalizing to unseen test data can be seen as a reinforcement learning environment with known dynamics:
the observations are train data and train loss, and the selection of hypothesis $h$ is the action.
The objective is to maximize the return, which is $-\Ltrue$.
Within this interpretation, our approach is an actor-critic method where \gls{nc} takes the value network's role.

\section{Related Works}

\paragraph{Complexity Measures for Deep Networks}
The question of why deep networks generalize despite being over-parameterized has been the focus of many recent works.
Building on traditional generalization theory \citep{vapnik1999overview,mcallester1999pac}, such works have adapted 
PAC-Bayes bounds \citep{dziugaite2017computing,zhou2018non}
and norm-based bounds \citep{neyshabur2015norm,bartlett2017spectrally} to deep networks.
Other works have proposed measures that empirically correlate with generalization \citep{keskar2016large,liang2017fisher,DBLP:journals/corr/abs-1901-01672}.
This work proposes an alternative approach to the problem of explaining generalization. 
While these previous works rely on human-designed measures of complexity, \gls{nc} learns such a measure in a data-driven way, allowing it to learn to reflect the complex interaction between the function $h$ and the data distribution.

\paragraph{Predicting Generalization}
A few recent works have proposed to predict generalization using function approximation.
\citep{jiang2018predicting} learns linear regression coefficients to predict the generalization gap, and \cite{yak2019towards} extends their model by using a small neural network instead of a linear model.
\cite{unterthiner2020predicting} proposes to predict neural network accuracy from various weight features.
These approaches are all \textit{correlational} analyses of generalization.
As discussed in \cref{subsec:interpretations}, \gls{nc}'s framework allows for the discovery of causal factors of generalization because of the back-and-forth interaction between task learning and predicting generalization.

\paragraph{Meta-Learning} \nocite{Lee2018}
Our method falls within the framework of meta-learning \citep{thrun1998learning,schmidhuber1996simple},
in which a model learns useful information about the learning process itself through interactions with a set of different but related tasks.
Recent methods formulate the meta-learning problem as learning 
optimizers \citep{ravi2016optimization},
data embeddings \citep{snell2017prototypical},
initial parameters \citep{finn2017model}, 
or parameter priors \citep{kim2018bayesian}.
A key difference is that \gls{nc} is learner-agnostic: we can use an \gls{nc} model trained on one class of task learners to regularize other task learners (e.g., different architecture, activation, optimizer).
Additionally, using \gls{nc}'s output as a regularization loss makes it more stable in long training runs than previous meta-learning algorithms.

MetaReg \citep{balaji2018metareg} proposes to meta-learn a weighted $L_1$ regularizer. 
It is probably the most similar method because they also learn a regularizer in a meta-learning setup.
However, their regularizer does not transfer to different network architectures because it operates in parameter space.
In contrast, \gls{nc} learns a complexity measure in function space, allowing it to generalize to different architectures and randomly initialized networks.
\section{Experiments}
\label{sec:experiments}

\subsection{Sinusoid Regression}
\label{subsec:sine_regression}

\begin{figure} \centering
\begin{minipage}{.27\textwidth} \centering 
\vspace{-25pt}
    \includegraphics[width=\linewidth]{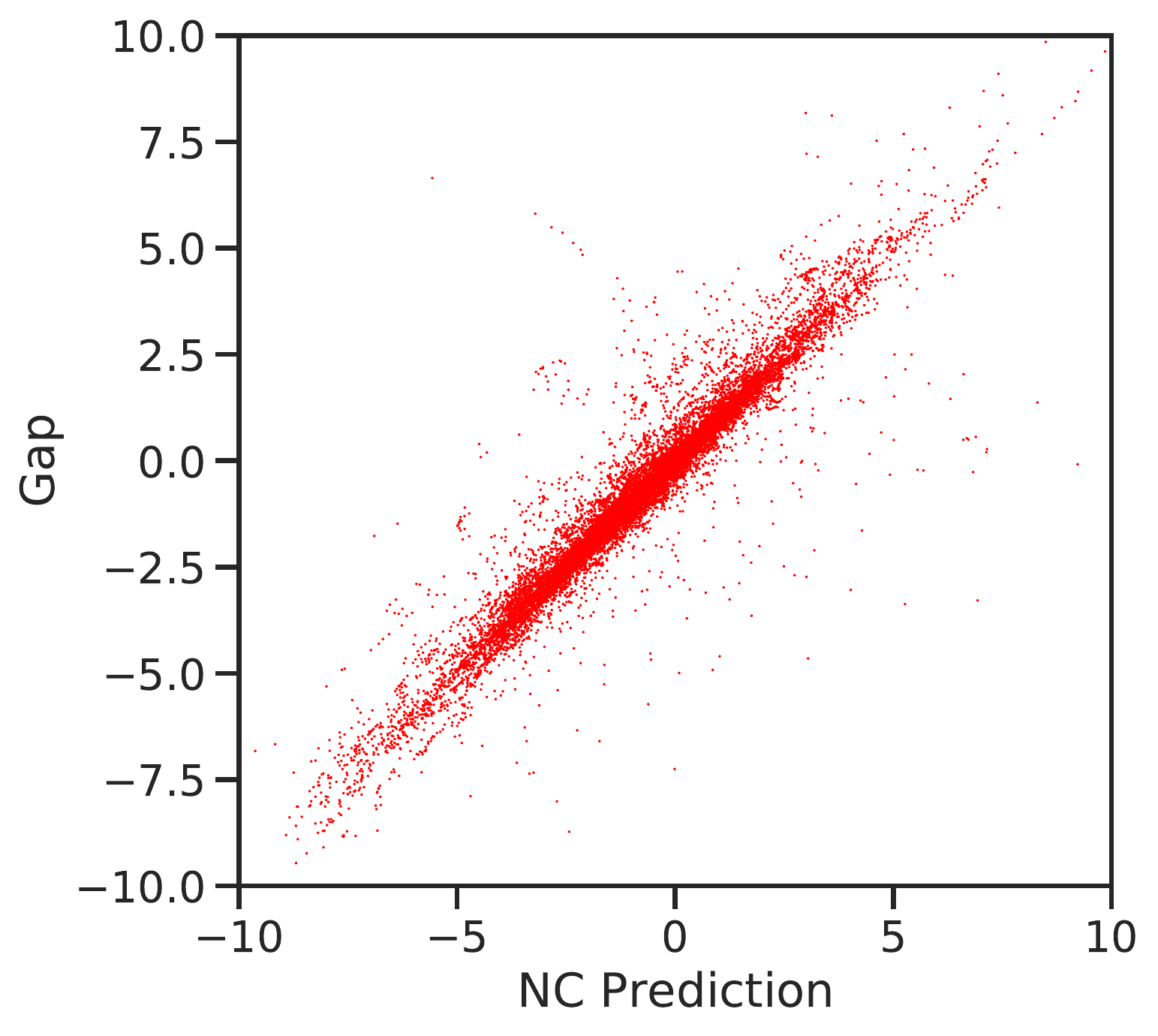}
\caption{ \gls{nc} predictions and true gap values. } 
\label{fig:R} \label{fig:nc_R}
\end{minipage} \hfill
\begin{minipage}{.31\textwidth}
\vspace{-30pt}
\centering \includegraphics[width=\linewidth]{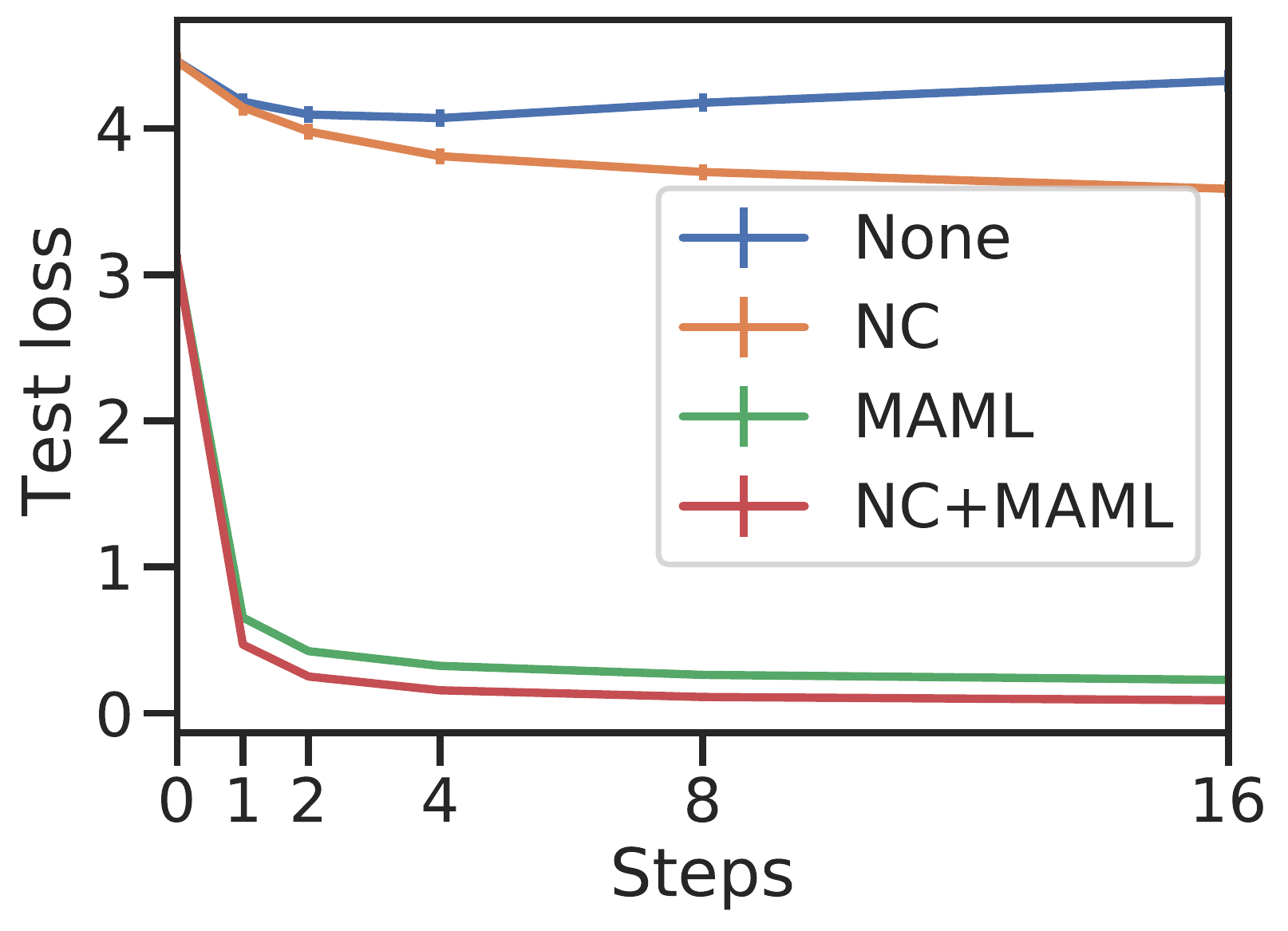}
\caption{Test loss of combinations of \gls{nc} and MAML.} 
\label{fig:nc_maml}
\end{minipage} \hfill
\begin{minipage}{.33\textwidth}
\vspace{-20pt}
\begin{tabular}{lc}
\toprule%
Method & Accuracy \\
\midrule%
Full & \BF 92.93 \\
\midrule%
- Huber loss & 89.74 \\
- Bias & 84.30 \\
- Loss conditioning & 72.09 \\
- Bilinear layer & 23.69 \\
\bottomrule%
\end{tabular}
\caption{Ablation study; we sequentially removed each architectural component.}
\label{tab:class_ablation_1shot}
\end{minipage}
\vspace{-10pt}
\end{figure}

To illustrate the basic properties of \gls{nc}, we begin with a toy sinusoid regression problem introduced in \cite{finn2017model}.
Each task is a sine function $x \mapsto A \textrm{sin} (x+b)$
where $x, A, b$ are uniformly sampled from $[-5, 5]$, $[0.1, 5]$ and $[0, \pi]$, respectively.
We consider $10$-shot learning and use the mean squared error as the loss function.
We measure test losses using a held-out test set of $15$ datapoints.
During meta-training, the layer size, activation, number of layers, learning rate, and number of steps were all fixed to ($40$, ReLU, $2$, $0.01$, $16$), respectively.

\paragraph{NC-Gap Fit}
In \cref{fig:nc_R}, we compare the predictions of a trained \gls{nc} model with the generalization gap $\gap$ for many batches.
These two values are strongly correlated ($R^2=0.9589$), indicating that \gls{nc} is indeed capable of predicting the gap based on the complexity of the learner's hypothesis $h$.

\paragraph{VS Other Regularizers}
We compared \gls{nc} against various other regularization methods.
We report these results in the appendix due to space issues.
\gls{nc} performs all other methods by a large margin because learners tend to overfit very quickly in this few-shot regression problem.

\paragraph{Integration with MAML}
We investigated whether \gls{nc} can integrate with MAML~\citep{finn2017model}, an alternative meta-learning approach.
We first note that these two methods solve two very different problems: \gls{nc} aims to regularize any randomly initialized network while MAML simply finds one set of initial parameters from which learning occurs quickly.
We first trained a MAML model and then trained \gls{nc} using snapshots obtained from MAML trajectories.
\Cref{fig:nc_maml} shows that \gls{nc} successfully reduces the final test loss for both settings: with and without MAML initializations.
These results indicate that the regularization effect of \gls{nc} is orthogonal to that of MAML, and that future improvements in either direction can benefit the other.

\paragraph{Learning Curve}
We show the train and test loss curves of a task learner regularized by \gls{nc} in \cref{fig:nc_curves}.
The test loss is lower than the train loss throughout training, which is a trend that we observed in all experimental settings we considered.
In other words, the estimate of \gls{nc} is a precise enough surrogate for $\gap$ that minimizing it results in negative $\gap$.

\subsection{Out-of-distribution Task Learners}
\begin{figure}[t] \centering
\begin{minipage}{0.3\linewidth} \centering 
    \includegraphics[width=\linewidth]{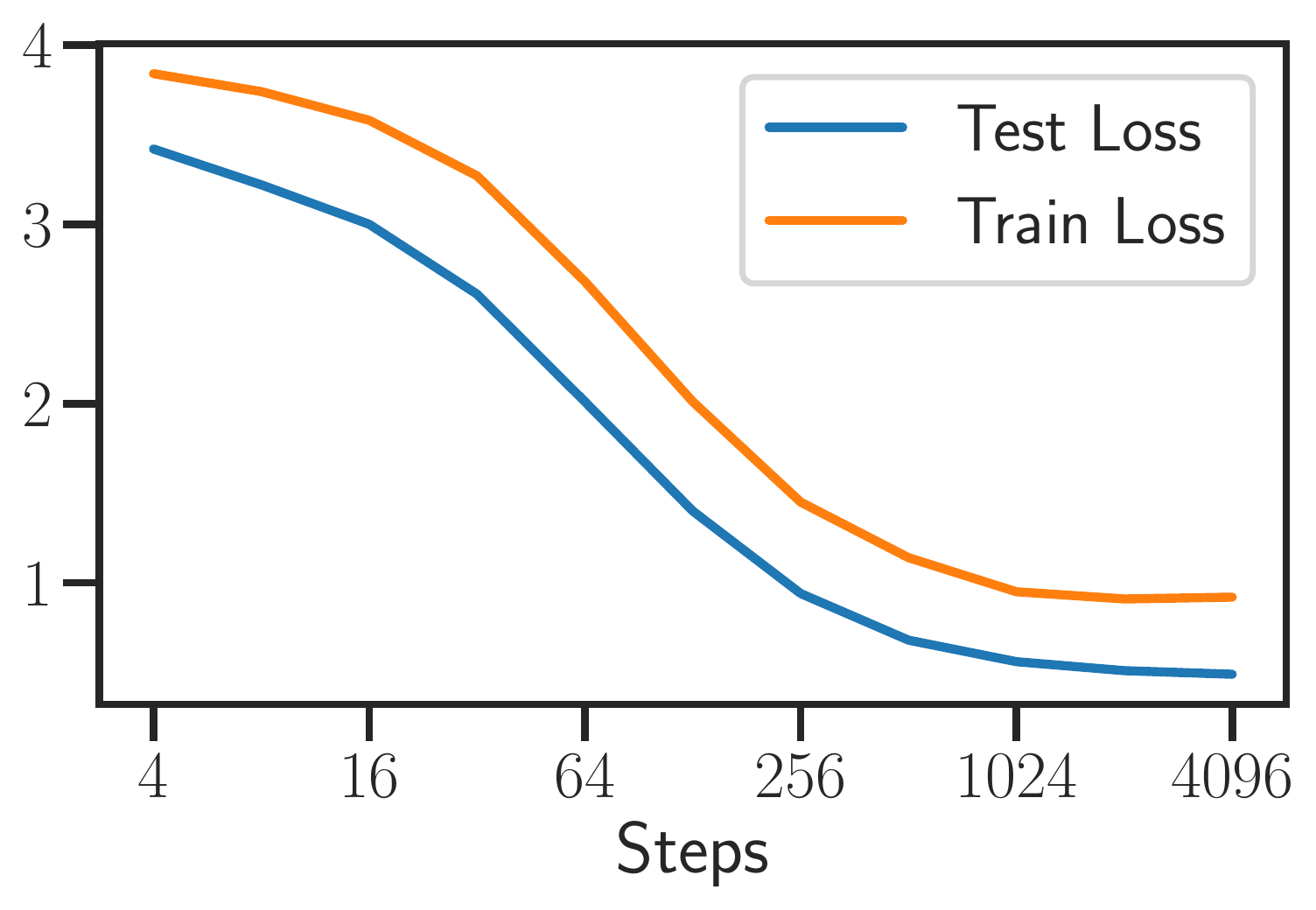}
    \caption{Train and test loss curves of \gls{nc} regularization.} \label{fig:nc_curves}
\end{minipage}%
\hfill%
\begin{minipage}{.33\linewidth} \centering
    \includegraphics[width=0.9\linewidth]{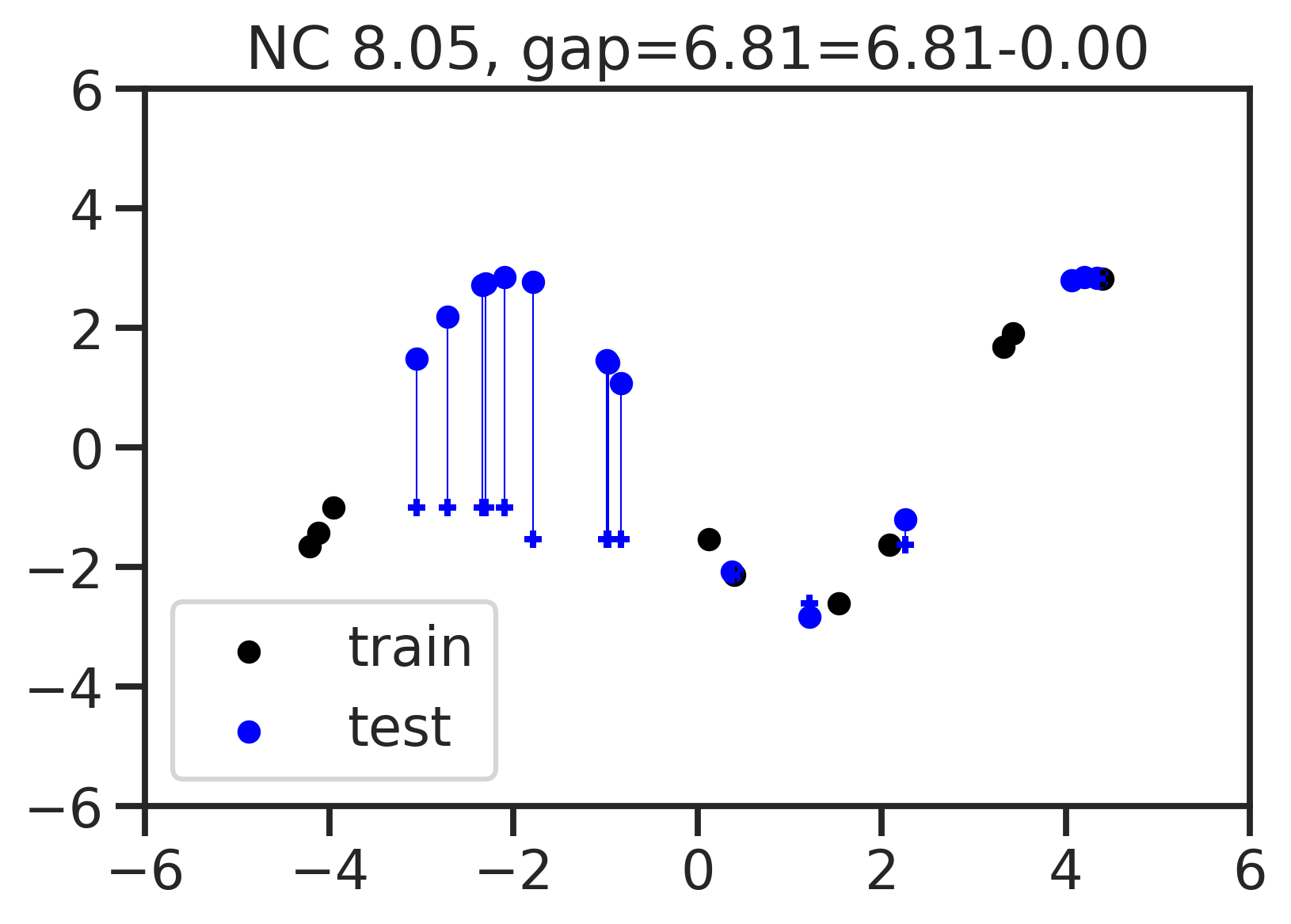}
\vspace{-7pt}
\caption{ Nearest neighbor regression. 
Circles and plus signs represent targets and predictions. }
\label{fig:reg_vis}
\end{minipage} \hfill
\begin{minipage}{.33\linewidth} \centering
    \begin{tabular}{lcc}
    \toprule
        & SVHN & CIFAR \\
    \midrule
    Baseline
        & 93.23 & 79.76 \\
    \midrule
    size $\times 2$
        & 93.59 & 79.64 \\
    $\mathrm{NC}$ (ours)
        & \BF 93.83 & \BF 81.15 \\
    \midrule
    size $\times 4$
        & \color{blue} 93.88 & 80.47 \\
    \bottomrule
    \end{tabular}
    \label{tab:capacity}
    \caption{Comparison to networks with more capacity.} 
\end{minipage}
\includegraphics[width=\linewidth]{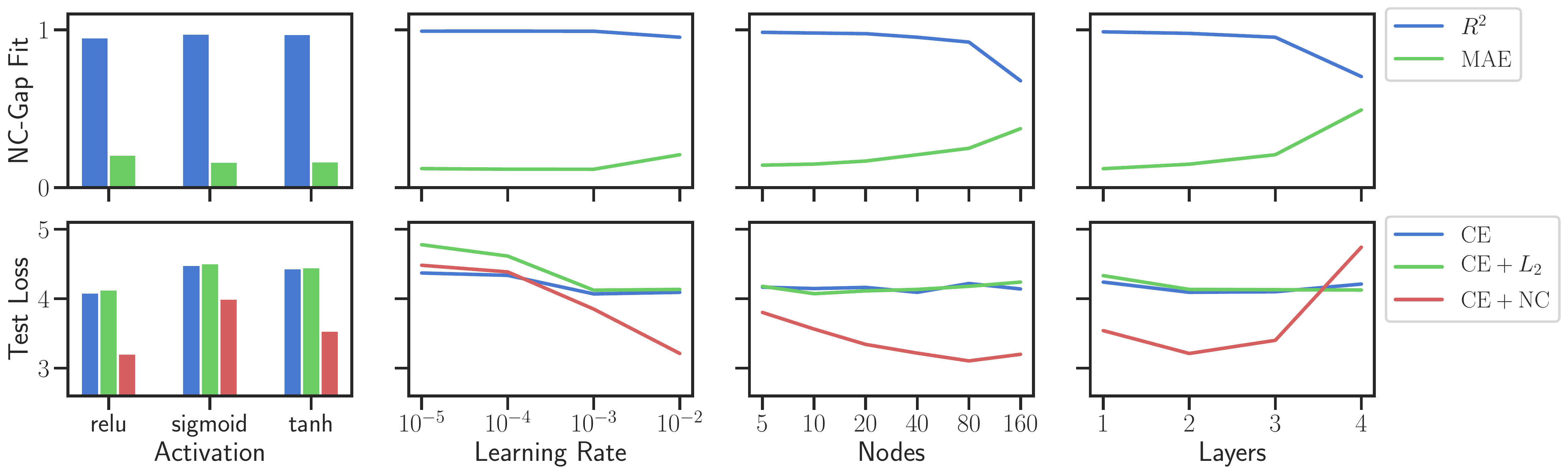}
\caption{
    Evaluation of Out-of-distribution task learners.
    The x axis shows the altered learner hyperparameter.
    (top) NC-gap fit statistics and
    (bottom) test loss after learning with \gls{nc} and baselines.
    } \label{fig:ood}
\end{figure}

\paragraph{Visualization of Simple Learners}
We observed the behavior of \gls{nc} when given hypotheses from closed-form learners with very distinct properties.
We consider a nearest neighbor learner.
We show $(X_\mathrm{tr}, X_\mathrm{te}, Y_\mathrm{tr}, Y_\mathrm{te}, h(X_\mathrm{tr}), h(X_\mathrm{te}))$ and gap values along with \gls{nc} predictions in \cref{fig:reg_vis}.
Note that while we show test targets $Y_\mathrm{te}$ in the figure, \gls{nc} does not observe them.
Even though the shown predictive function $h$ perfectly fits the training data, \gls{nc} penalizes the function because it does not have a sinusoidal shape.

\paragraph{Changing NN Learner Hyperparameters}
We evaluated how well \gls{nc} can generalize to other task learners on the sinusoid regression task in \cref{subsec:sine_regression}.
We measured performance while alternating four different learning algorithm hyperparameters: activation, learning rate, nodes per layer, and the number of layers.
We used the same \gls{nc} model which was trained using only ($\mathrm{relu}$, $10^{-2}$, $40$, $2$) for these hyperparameters.
In \cref{fig:ood}, we measure how well \gls{nc} fits the $\gap$ through their $R^2$ statistic and their mean absolute error (MAE).
\gls{nc}'s predictions are accurate even when the learners are changed, only degrading when the learner is significantly more expressive ($160$ nodes and $4$ layers).
\cref{fig:ood} additionally reports the test losses of \gls{nc} regularization compared to the cross-entropy loss and $L_2$ regularization.
\gls{nc} regularization shows consistent improvements except for when the network architecture was too different ($4$ layers).
This experiment demonstrates that \gls{nc}'s complexity measure captures the properties of $h$ itself, regardless of its specific parameterization.
We emphasize that such a transfer between different task learners is not possible with other meta-learning approaches. \citep{finn2017model,snell2017prototypical,garnelo2018neural}.

\subsection{Few-shot Image Classification}
\begin{figure} \centering
\vspace{-10pt}
\begin{minipage}{0.32\textwidth} \centering
    \includegraphics[width=.98\linewidth]{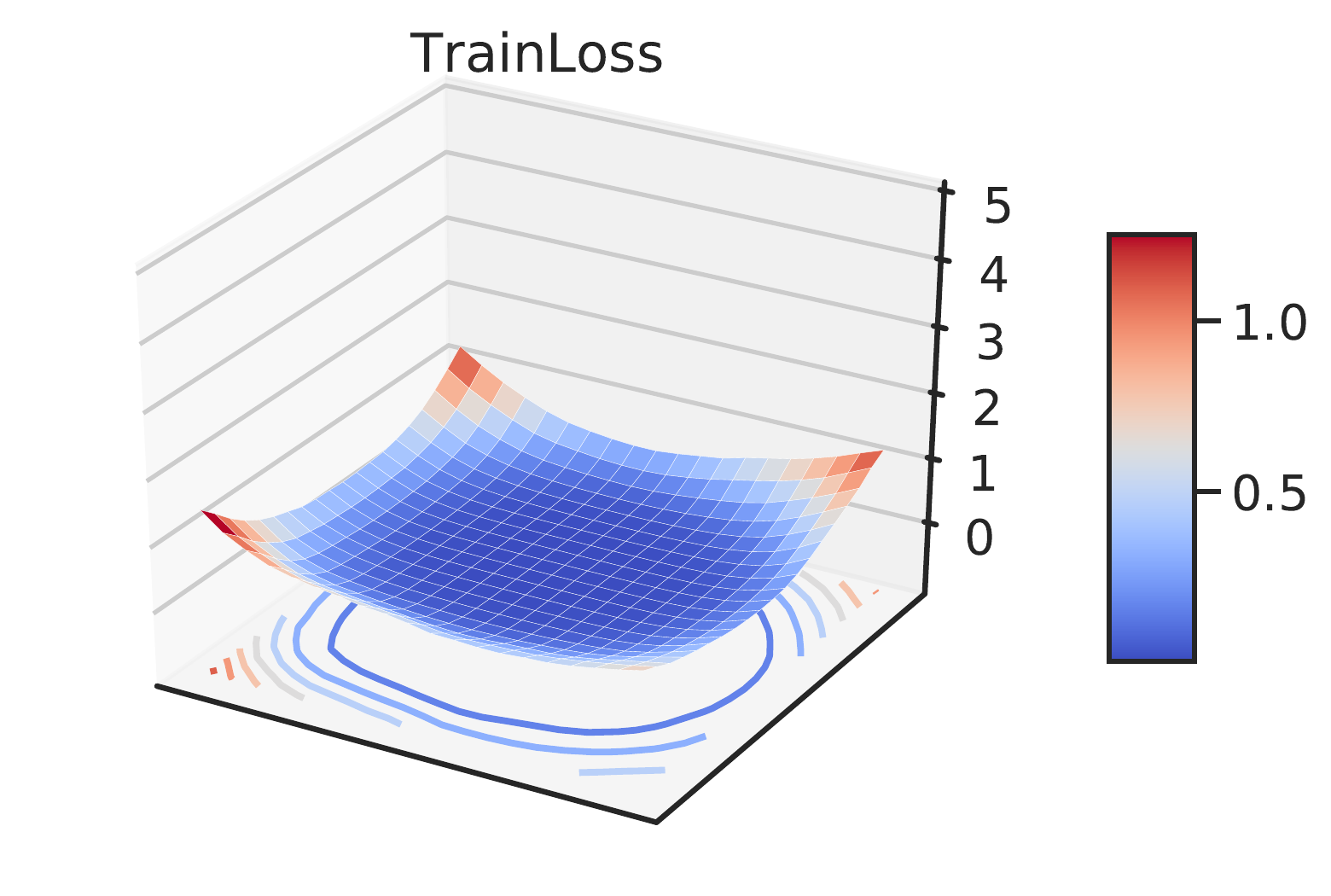} \\
\end{minipage}%
\begin{minipage}{0.32\textwidth} \centering
    \includegraphics[width=.98\linewidth]{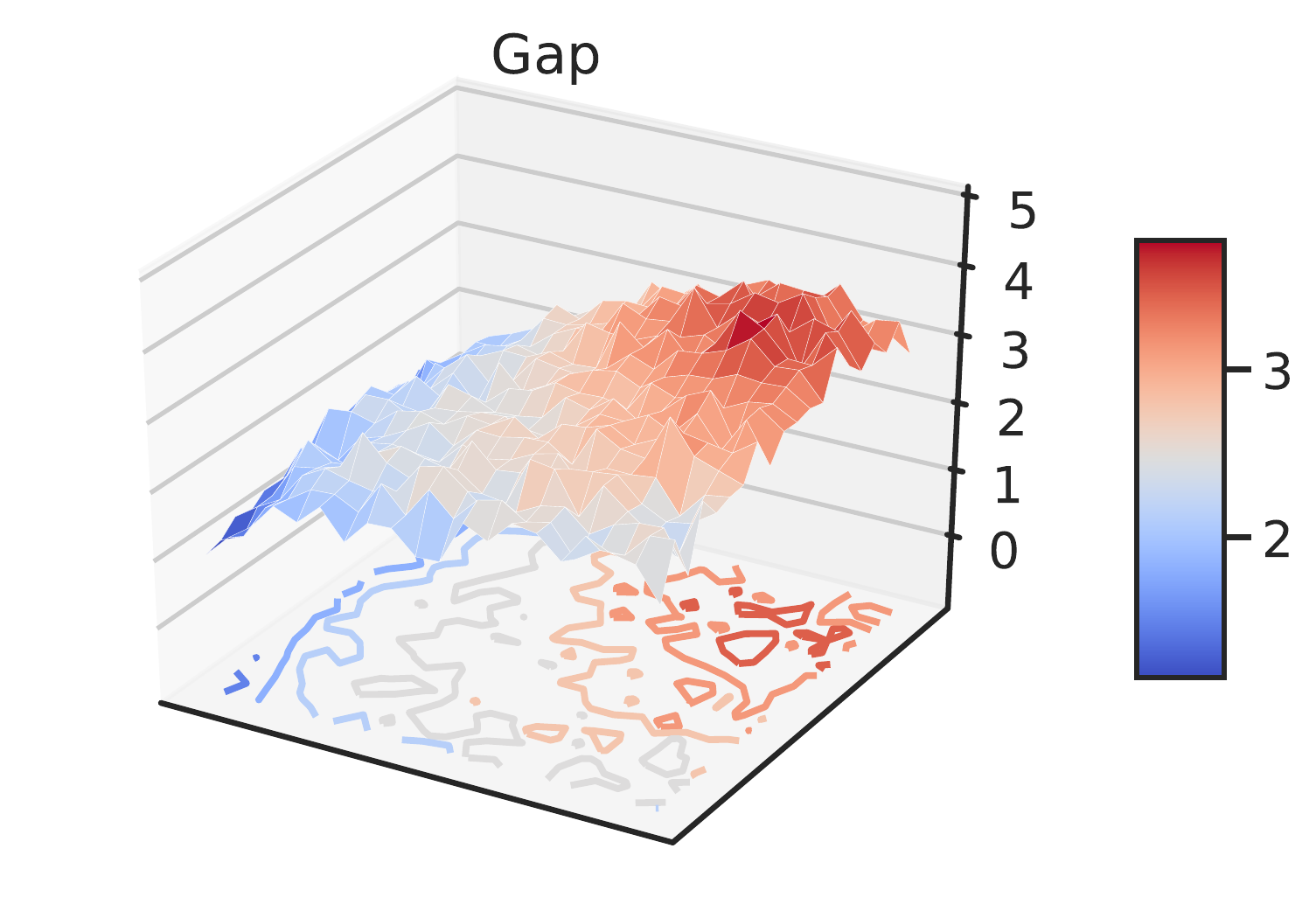}
\end{minipage}%
\begin{minipage}{0.32\textwidth} \centering
    \includegraphics[width=.98\linewidth]{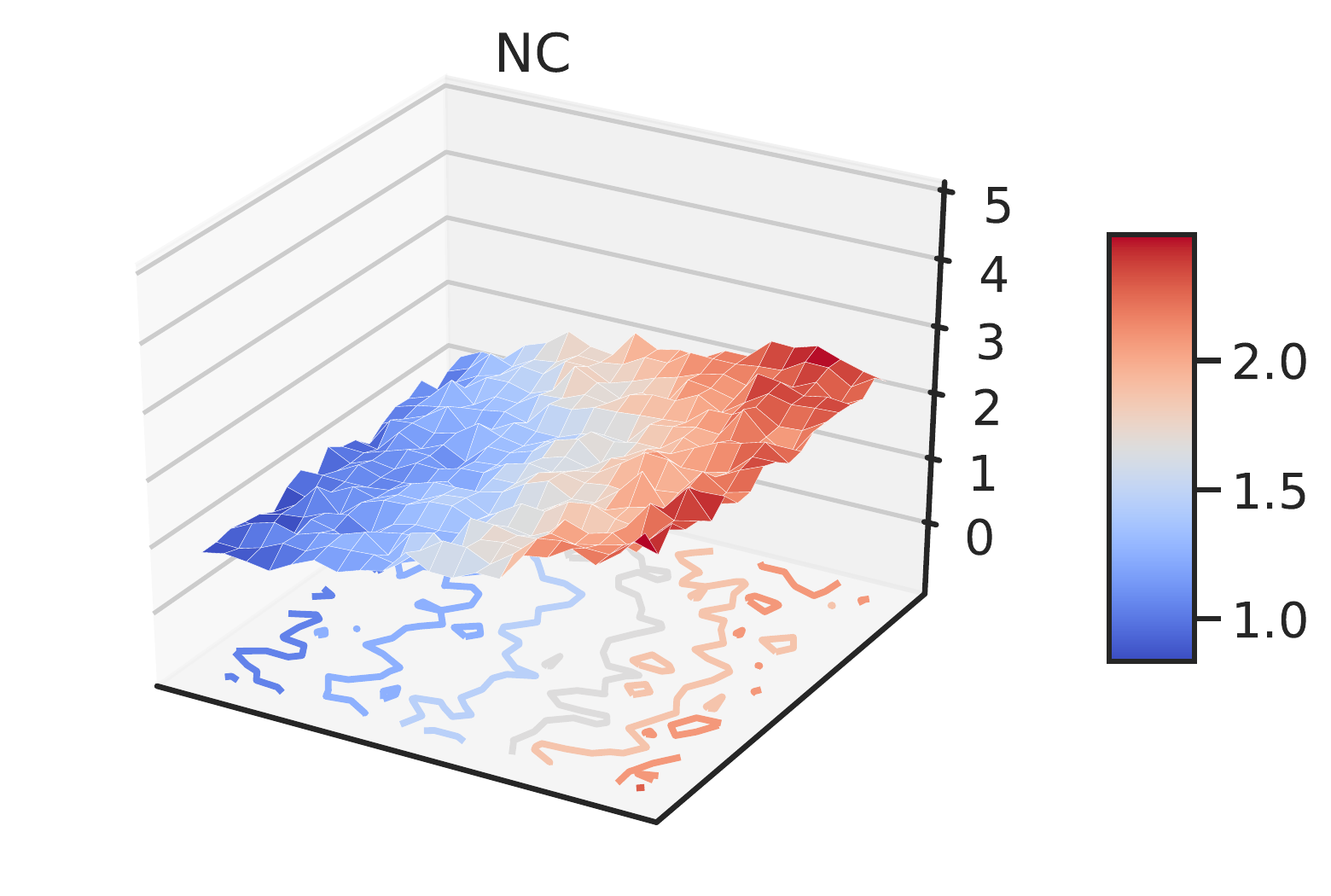}
\end{minipage}%
\caption{Visualization of loss surfaces. Best viewed zoomed in.}
\label{fig:surface_vis}
\vspace{-10pt}
\end{figure}
\paragraph{Ablation Study}
To validate our architectural choices for \gls{nc}, we performed an ablation experiment using a $10$-way $1$-shot classification task on the Omniglot \cite{lake2015human} dataset.
Results in \cref{tab:class_ablation_1shot} show the performance of several architectures for \gls{nc}.
First note that removing Huber loss and using MSE loss degrades performance, likely due to large gradients when the difference between \gls{nc}'s prediction and $\gap$ is large.
Furthermore, removing any of the additional components for the classification model (bias, train loss, bilinear layer) reduces accuracy, with the bilinear layer being the most critical for performance.

\paragraph{Loss Surface Visualization}
We use the filter-wise normalization technique introduced in \cite{li2018visualizing} to visualize the loss surfaces of a learner at convergence.
\cref{fig:surface_vis} shows that the train loss is at a stable local minimum, but the generalization gap can decrease further by moving in a specific direction.
Because \gls{nc} correctly captures the trend of the gap, minimizing the \gls{nc}-regularized loss would move the learner to a region with lower test loss.

\subsection{Single Image Classification Tasks}

\begin{figure}
    \includegraphics[width=0.33\linewidth]{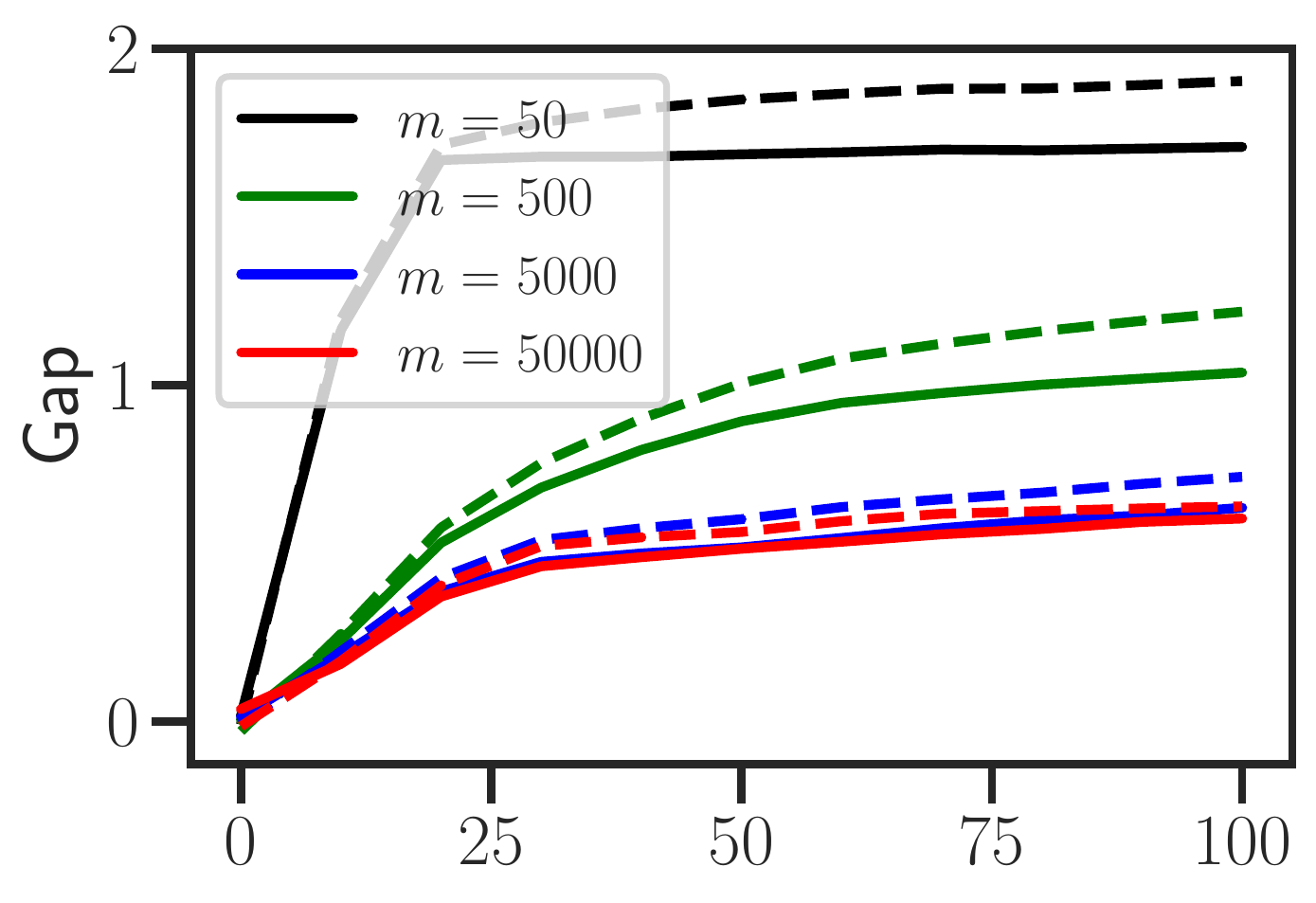}
    \includegraphics[width=0.33\linewidth]{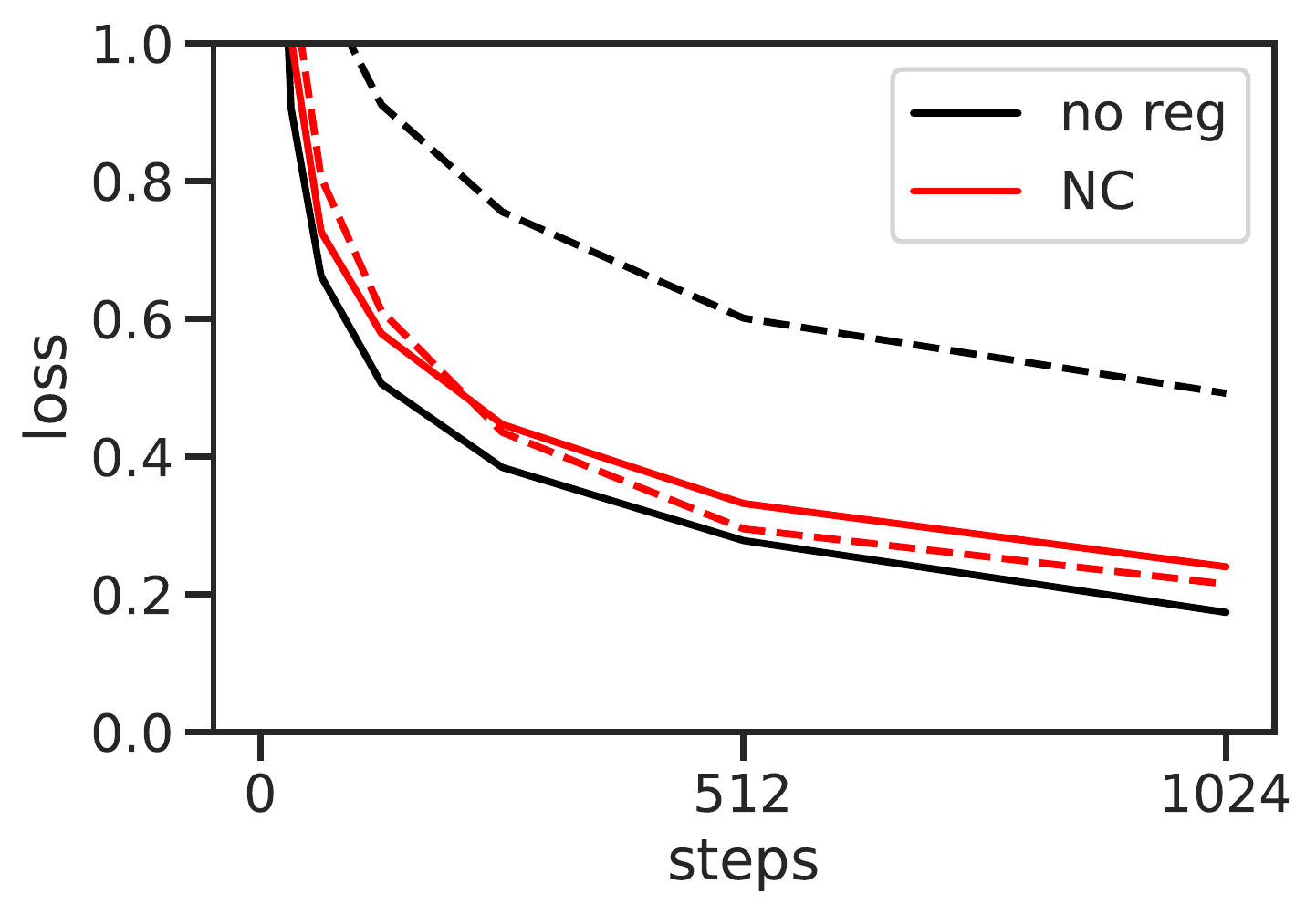}
    \includegraphics[width=0.33\linewidth]{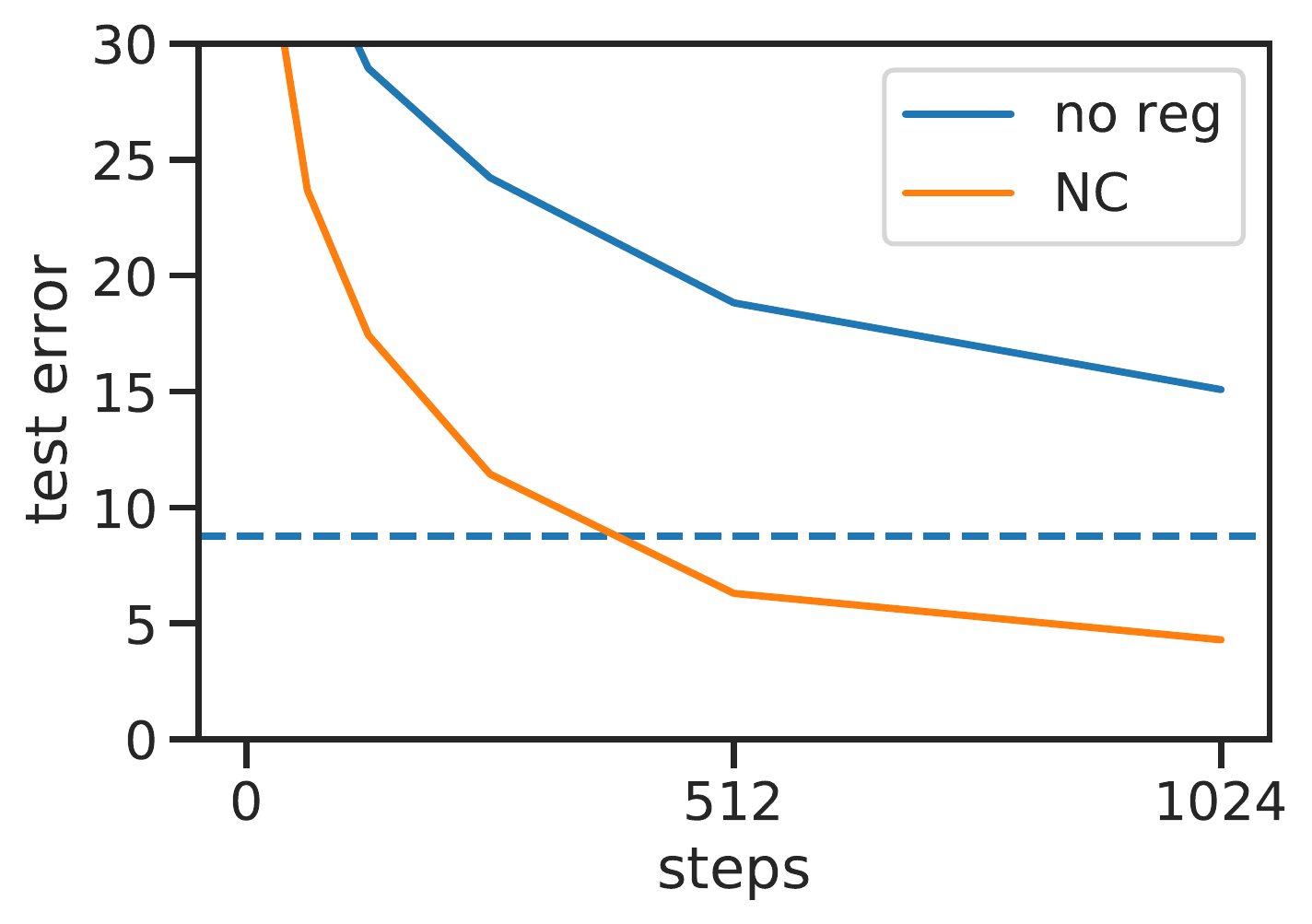}
\caption{
    Single-task regularization on the KMNIST dataset.
    Left: \gls{nc} estimates (solid lines) and gap values (dashed lines) when training with datasets of different size.
    Center: Learning curve of train (solid lines) and test (dashed lines) losses with and without \gls{nc}.
    Right: Learning curve of test error with and without \gls{nc}. The dashed horizontal line represents baseline performance after convergence.
}
\label{fig:MNIST_exps}
\end{figure}

\begin{table}[t] \centering
\begin{tabular}{lccccc}
\toprule%
    & MNIST & FMNIST & KMNIST & SVHN & CIFAR-10 \\
\midrule%
Cross-Entropy 
    & \msd{98.31}{0.12} & \msd{88.21}{0.05} & \msd{91.12}{0.08} & \msd{93.23}{0.44} & \msd{79.76}{0.34} \\
$L_2$ Regularization
    & \msd{98.36}{0.06} & \msd{88.46}{0.17} & \msd{91.31}{0.05} & \msd{94.06}{0.44} & \msd{79.84}{0.75} \\
Label Smoothing 
    & \msd{98.57}{0.11} & \msd{89.15}{0.40} & \msd{91.40}{0.05} & \color{blue} \msd{94.70}{0.38} & \color{blue} \msd{80.45}{0.44} \\
Mixup 
    & \msd{97.80}{0.27} & \color{blue} \msd{89.50}{0.16} & \msd{91.10}{0.17} & \msd{\BF 94.88}{0.24} & \color{blue} \msd{80.92}{0.47} \\
\midrule%
$\mathrm{NC}$ Regularization
    & \msd{\BF 99.03}{0.07} & \msd{\BF 89.74}{0.17} & \msd{\BF 96.30}{0.07} & \msd{93.83}{0.18} & \msd{\BF 81.15}{0.36} \\
\bottomrule & & & & & \\
\end{tabular}
\caption{Mean test accuracies and $95\%$ confidence intervals of each method on $5$ runs.}
\label{tab:single_task}
\vspace{-20pt}
\end{table}

Finally, we evaluate \gls{nc} on single tasks, following our protocol outlined in \cref{sec:formulation} of constructing a large number of sub-tasks using only the train split and then evaluating on the test set.
We consider five different datasets:
three MNIST variants (MNIST \cite{lecun1998mnist},  FMNIST \cite{xiao2017fashion}, KMNIST \cite{clanuwat2018deep}), 
for which the learner was a 1-layer MLP with $500$ units,
and SVHN \cite{netzer2011reading} along with CIFAR-10 \cite{krizhevsky2009learning} , for which we used the ResNet-18 \cite{he2016deep} network.
To isolate the effect of the regularizers, we do not use image augmentation or manual learning rate scheduling.
Due to space constraints, we describe detailed hyperparameters and \gls{nc} architectures in the appendix.

\paragraph{Network Size}
To test whether the gains from \gls{nc} are simply from the additional parameters, we compared against larger networks in \cref{tab:capacity}.
We constructed networks with $2\times, 4\times$ capacity by jointly training multiple networks with a pooling layer.
These results show that \gls{nc} is much more effective compared to simply using a larger model, and is in fact outputting a useful approximation to $\gap$.

\paragraph{Effect of Dataset Size}
We investigated whether \gls{nc} can capture the effect of dataset size $m$ on overfitting.
Using an unregularized learner on the KMNIST dataset, we measured how the gap and \gls{nc}'s estimate of it changes during task learning.
The left figure of \cref{fig:MNIST_exps} shows that overfitting occurs more severely with smaller datasets, and \gls{nc} successfully captures this trend.

\paragraph{Regularization Performance} 
We measure \gls{nc}'s effectiveness as a regularizer, comparing it to other regularization methods for classification tasks.
We consider four baselines: standard cross-entropy loss, $L_2$ loss, label smoothing \cite{szegedy2016rethinking}, and Mixup \cite{zhang2017mixup}.
Results in \cref{tab:single_task} show that \gls{nc} consistently improves test accuracy and performs similarly to modern regularization methods, even outperforming them on some tasks.
We further visualize the learning curve of a \gls{nc}-regularized task learner on the KMNIST dataset in the middle and right figures of \cref{fig:MNIST_exps}, which show that \gls{nc} accelerates training in addition to improving the final accuracy.

\section{Conclusion}
\glsreset{nc}
We proposed \gls{nc}, a meta-learning framework for predicting generalization.
We motivated \gls{nc} through a generalization bound based on estimators of the generalization gap $\gap$. 
It treats generalization as a regression problem, learning the degree to which a function will generalize to unseen test data..
Notably, regularizing with \gls{nc} consistently resulted in models with lower test loss than train loss.
\gls{nc} is capable of regularizing task learners from previously unseen architectures and hyperparameters.
Additionally, our experiments demonstrate that \gls{nc} can learn to learn in much larger tasks compared to previous meta-learning works, such as MNIST or CIFAR.

We see many exciting future directions for improvement within the \gls{nc} framework.
First, the current requirement of validation data ($X_\mathrm{te}$) limits the applicability of our model, but this requirement is not intrinsic to the framework of \gls{nc}.
In future work, we will investigate architectures and training schemes that allow for the accurate prediction of $\gap$ from data and predictions from training data.
We also think \gls{nc} can be extended beyond supervised learning, for example, predicting the generalization gap in density estimation or self-supervised learning.
Furthermore, we are interested in scaling this approach to the ImageNet dataset \cite{deng2009imagenet} since our experiments have shown that \gls{nc} is much more scalable than other meta-learning methods.

\newpage

\section*{Broader Impact}
Safety and reliability are important desiderata for machine learning models, and these properties are even more important given the recent success of black-box models such as deep neural networks.
Our proposed approach can be applied to improve training in any regression or classification task,
and our experiments demonstrate its ability to 
(1) predict the generalization gap and 
(2) improve test loss when used as a regularizer.
\gls{nc}'s data-driven prediction of the generalization gap can serve as an approximate guarantee for safety-critical problems.
Furthermore, future extensions of \gls{nc} may enable previously impossible tasks since \gls{nc} was particularly effective in settings where conventional learners overfitted.

\begin{ack}
This work was supported by Engineering Research Center Program through the National Research Foundation of Korea (NRF) funded by the Korean Government MSIT (NRF-2018R1A5A1059921), Institute of Information \& communications Technology Planning \& Evaluation (IITP) grant funded by the Korea government (MSIT) (No.2019-0-00075), IITP grant funded by the Korea government(MSIT) (No.2017-0-01779, XAI) and the grant  funded  by  2019  IT Promotion  fund
(Development of AI based Precision Medicine Emergency System) of the Korea government (Ministry of Science and ICT).
\end{ack}

\bibliography{master.bib} 

\appendix
\section{Proof of Motivating Bound}
We first invoke the following lemma which relates the empirical and true cumulative distribution functions of i.i.d. random variables.
\begin{lemma}[Dvoretzky–Kiefer–Wolfowitz Inequality]
\label{lemma:DKW}
Let $X_1, \ldots, X_n$ be i.i.d. random variables with cumulative distribution function (CDF) $F(\cdot)$.
Denote the associated empirical CDF as $F_n(x) \triangleq \frac{1}{n} \sum_{i=1}^n \mathbf{1}_{\{X_i \leq x\}}$.
The following inequality holds for all $x$ w.p. $\geq 1 - \delta$:
\begin{align}
  \left|F_n(x) - F(x) \right| \leq \sqrt{\frac{\log \frac{2}{\delta}}{2n}}.
\end{align}
\end{lemma}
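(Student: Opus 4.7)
The plan is to prove the Dvoretzky--Kiefer--Wolfowitz inequality by reducing to the uniform case and then applying a sharp concentration bound for the supremum of the empirical process.

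First I would reduce to the case where the $X_i$ are uniform on $[0,1]$. If $F$ is continuous, the probability integral transform gives that $U_i \defeq F(X_i)$ are i.i.d.\ $\mathrm{Uniform}[0,1]$, and $\sup_{x \in \mathbb{R}} |F_n(x) - F(x)|$ equals $\sup_{u \in [0,1]} |G_n(u) - u|$ in distribution, where $G_n$ is the empirical CDF of the $U_i$. For a possibly discontinuous $F$, I would use the generalized inverse $F^{-1}(u) = \inf\{x : F(x) \geq u\}$ to construct a coupling showing the sup-norm statistic in the general case is stochastically dominated by that of the uniform case. So it suffices to bound $\sup_{u \in [0,1]}|G_n(u) - u|$.

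Second, I would bound $\Prob[\sup_u |G_n(u) - u| \geq \lambda]$ by $2\exp(-2n\lambda^2)$. At any fixed $u$, the pointwise bound follows immediately from Hoeffding's inequality, since $n G_n(u) = \sum_i \mathbf{1}_{\{U_i \leq u\}}$ is a sum of $n$ i.i.d.\ Bernoulli$(u)$ random variables, giving $\Prob[|G_n(u) - u| \geq \lambda] \leq 2\exp(-2n\lambda^2)$. Upgrading this to a uniform bound in $u$ with the \emph{same} sharp constant is the content of Massart's refinement of DKW; the standard proof constructs an exponential supermartingale adapted to the ordered statistics $U_{(1)} < \cdots < U_{(n)}$ and applies Doob's maximal inequality, together with a separate symmetric argument for the lower tail which contributes the factor of $2$. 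Third, inverting $\delta = 2\exp(-2n\lambda^2)$ yields $\lambda = \sqrt{\log(2/\delta)/(2n)}$.

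The main obstacle is the tight supremum bound in the second step. A naive union bound over an uncountable family of $u$ does not even make sense, and while $G_n(\cdot) - u$ is piecewise monotone (so it suffices to control its values at the $n$ order statistics), a union bound over those $n$ points inflates the constant by $\log n$. Chaining or VC-type bounds based on the fact that the class of half-intervals $\{(-\infty, x] : x \in \mathbb{R}\}$ has VC dimension $1$ also lose a multiplicative constant inside the exponent and thus do not reproduce the sharp $2$. Massart's martingale argument is the technical core and is what makes DKW a nontrivial strengthening of pointwise Hoeffding; if only a weaker constant were needed, the Vapnik--Chervonenkis route would suffice.
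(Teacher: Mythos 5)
The paper does not actually prove this lemma: its ``proof'' is an explicit omission with citations to Dvoretzky--Kiefer--Wolfowitz for the original result, to Massart for the sharp constant, and to Kosorok for the two-sided form as stated. Your proposal is therefore being compared against a citation, and in substance it ends up in the same place. Your first step (reduction to the uniform case via the probability integral transform, with the generalized inverse handling discontinuous $F$) is correct and standard, and your pointwise Hoeffding bound is fine. But the entire content of the lemma is the uniform bound with the constant $2$ inside the exponent, and at exactly that point you write that it ``is the content of Massart's refinement'' and defer to it. A reduction of a theorem to its own hardest step, followed by a citation for that step, is not a proof; it is a longer citation. To be fair, you are candid about this, and your discussion of why the naive routes fail (union bounds over the order statistics cost a $\log n$, VC/chaining arguments degrade the exponent) correctly locates the difficulty.

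One substantive correction: your description of Massart's argument as ``an exponential supermartingale adapted to the ordered statistics plus Doob's maximal inequality'' is not how that proof goes. Massart works from the exact Birnbaum--Tingey/Smirnov formula for the distribution of the one-sided statistic $D_n^+ = \sup_u (G_n(u) - u)$ and carries out a delicate term-by-term analysis of that finite sum; martingale and chaining methods are precisely the ones that are known to lose the sharp constant, as you yourself note two sentences later. There is also a small quantitative subtlety worth recording if you ever write this out: the one-sided bound $\Prob[D_n^+ > \lambda] \leq \exp(-2n\lambda^2)$ holds only under the restriction $\exp(-2n\lambda^2) \leq \tfrac{1}{2}$, whereas the two-sided bound $\Prob[\sup_u |G_n(u)-u| > \lambda] \leq 2\exp(-2n\lambda^2)$ --- the form inverted to give $\lambda = \sqrt{\log(2/\delta)/(2n)}$ in the lemma --- holds for all $\lambda > 0$, so the factor of $2$ is not merely a union of two unconditional one-sided bounds.
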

\begin{proof}
  We omit the proof.
  The original theorem appears in \cite{dvoretzky1956asymptotic}
  and was refined by \cite{massart1990tight}.
  This two-sided version appears in \cite{kosorok2007introduction}.
\end{proof}

\begin{proposition}
\label{thm:nc_gen}
Let $D_\calH$ be a distribution of hypotheses,
and let $f: \calZ^m \times \calH \rightarrow \Real$ be any function of the training set and hypothesis.
Let $D_\Delta$ denote the distribution of $\gap(h) - f(S, h)$ where $h \sim D_\calH$,
and let $\Delta_1, \ldots, \Delta_n$ be i.i.d. copies of $D_\Delta$.
The following holds for all $\epsilon > 0$:
\begin{align} \label{eq:NC_thm}
  \Prob \left[ 
  \left| \Ltrue(h) - \Lemp(h) \right|  \leq f(S, h) + \epsilon
  \right]
  \geq 1 - \frac{ \left| \{i | \Delta_i > \epsilon \} \right| }{n} 
      - 2\sqrt{\frac{\log \frac{2}{\delta}}{2n}}.
\end{align}
\end{proposition}

\begin{proof}
Let $F(x), F_n(x)$ be the CDF and empirical CDF of $\Delta$, respectively.
\begin{align}
  \Prob \left( |\Lemp(h) + \textrm{NC}_S(h)  - \Ltrue(h)| > \epsilon \right)
  = \Prob_{\Delta \sim p_{NC}} \left( |\Delta| > \epsilon \right) 
  = F(\epsilon) - F(-\epsilon).
\end{align}
By \cref{lemma:DKW}, the following holds with probability $\geq 1 - \delta$:
\begin{align}
  F(\epsilon) - F(-\epsilon)
  \leq F_n(\epsilon) - F_n(-\epsilon) + 2\sqrt{\frac{\log \frac{2}{\delta}}{2n}}
  = \frac{n_\epsilon}{n} + 2\sqrt{\frac{\log \frac{2}{\delta}}{2n}}. 
\end{align}
\end{proof}

\newpage
\section{Additional Experiments}
\begin{table}[t]
\centering 
\begin{tabular}{l ccccc}
\toprule%
Steps & {1} & {2} & {4} & {8} & {16} \\
\midrule%
No regularization & 4.17 & 4.04 & 4.05 & 4.04 & 4.05 \\
$L_1 (\lambda=10.0)$ & 4.21 & 4.26 & 4.26 & 4.25 & 4.25 \\
$L_1 (\lambda=1.0)$ & 4.08 & 4.00 & 3.98 & 4.03 & 4.13 \\
$L_1 (\lambda=0.1)$ & 4.07 & 3.98 & 3.95 & 4.01 & 4.12 \\
$L_1 (\lambda=0.01)$ & 4.08 & 3.98 & 3.95 & 4.04 & 4.16 \\
$L_2 (\lambda=10.0)$ & 4.10 & 4.11 & 4.17 & 4.22 & 4.32 \\
$L_2 (\lambda=1.0)$ & 4.08 & 3.98 & 3.94 & 3.96 & 4.00 \\
$L_2 (\lambda=0.1)$ & 4.07 & 3.98 & 4.03 & 4.03 & 4.14 \\
$L_2 (\lambda=0.01)$ & 4.08 & 3.98 & 3.96 & 4.04 & 4.16 \\
$L_{1, \infty} (\lambda=1.0)$ & 4.08 & 4.04 & 4.07 & 4.08 & 4.08 \\
$L_{1, \infty} (\lambda=0.1)$ & 4.07 & 3.98 & 3.95 & 4.01 & 4.12 \\
$L_{1, \infty} (\lambda=0.01)$ & 4.08 & 3.98 & 3.96 & 4.04 & 4.16 \\
$L_{3, 1.5} (\lambda=1.0)$ & 4.07 & 4.03 & 4.11 & 4.09 & 4.07 \\
$L_{3, 1.5} (\lambda=0.1)$ & 4.08 & 3.99 & 3.95 & 4.00 & 4.08 \\
$L_{3, 1.5} (\lambda=0.01)$ & 4.07 & 3.98 & 3.95 & 4.04 & 4.15 \\
Orthogonal $(\lambda=1.0)$ & 4.16 & 4.17 & 4.19 & 4.22 & 4.32 \\
Orthogonal $(\lambda=0.1)$ & 4.08 & 4.00 & 3.96 & 3.99 & 4.06 \\
Orthogonal $(\lambda=0.01)$ & 4.07 & 3.99 & 3.95 & 4.00 & 4.14 \\
Frobenius $(\lambda=1.0)$ & 4.08 & 4.01 & 4.04 & 4.13 & 4.13 \\
Frobenius $(\lambda=0.1)$ & 4.07 & 3.98 & 3.95 & 4.02 & 4.11 \\
Frobenius $(\lambda=0.01)$ & 4.08 & 3.98 & 3.96 & 4.04 & 4.15 \\
\midrule%
Dropout ($p=0.1$) & 4.08 & 3.98 & 3.96 & 4.04 & 4.15 \\
Dropout ($p=0.3$) & 4.08 & 3.98 & 3.95 & 4.02 & 4.12 \\
Dropout ($p=0.5$) & 4.08 & 3.99 & 3.95 & 4.00 & 4.07 \\
Dropout ($p=0.7$) & 4.10 & 4.00 & 3.96 & 3.98 & 4.02 \\
Dropout ($p=0.9$) & 4.17 & 4.11 & 4.09 & 4.37 & {NaN} \\
\midrule%
MetaReg & 4.04 & 3.93 & 3.89 & 3.90 & 4.00 \\
Neural Complexity & \BF 3.87 & \BF 3.60 & \BF 3.36 & \BF 3.13 & \BF 2.93 \\
\bottomrule%
\end{tabular}
\vspace{10pt}
\caption{
    Test losses of various regularization methods after a certain number of steps.
}
\label{tab:sine_regularization}
\end{table}
\begin{figure}
\begin{subfigure}{.33\textwidth}
    \centering
    \includegraphics[width=\linewidth]{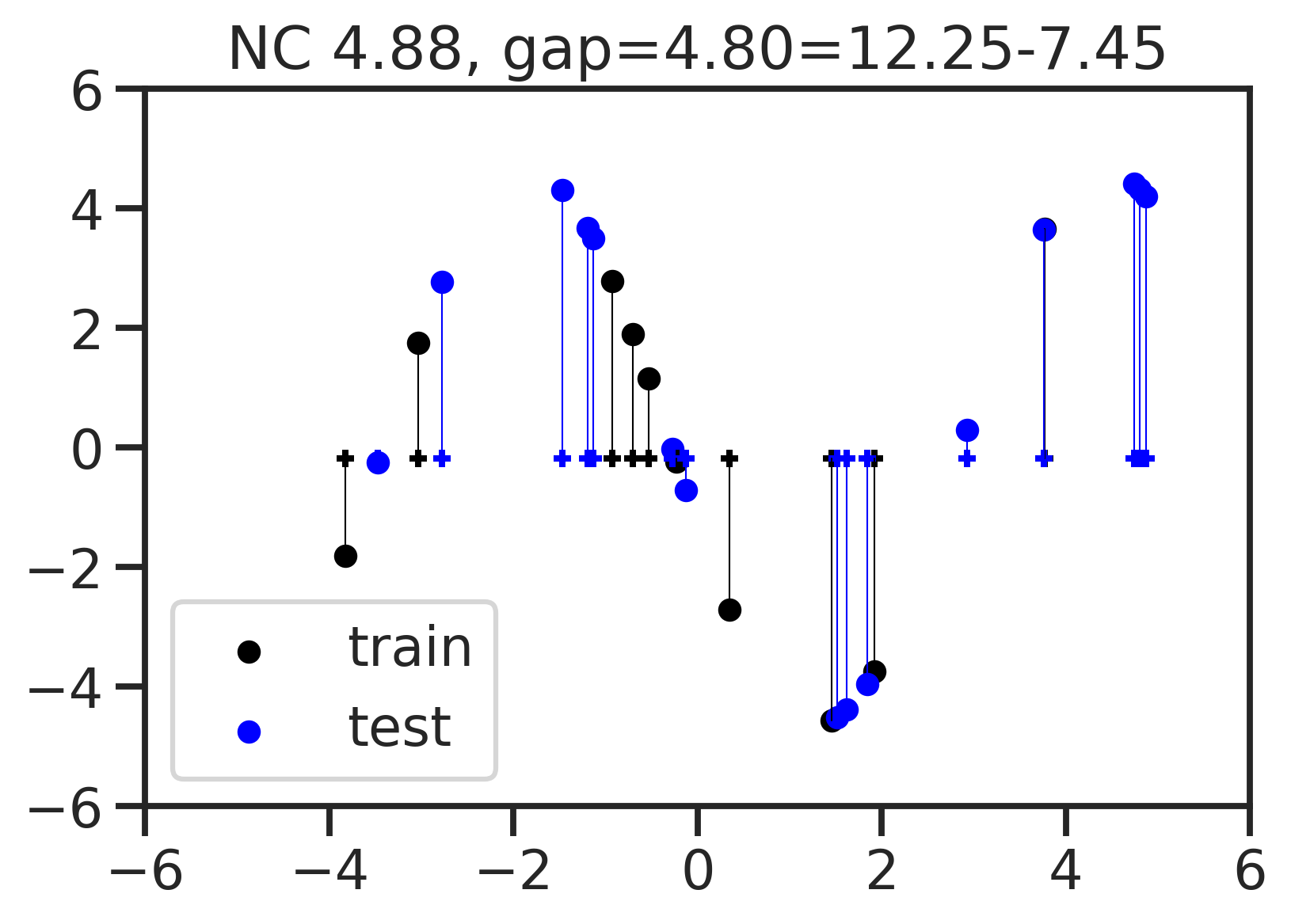}
\end{subfigure}%
\begin{subfigure}{.33\textwidth}
    \centering
    \includegraphics[width=\linewidth]{figures/nn_reg_vis/nearest_neighbor/largest_10}
\end{subfigure}%
\begin{subfigure}{.33\textwidth}
    \centering
    \includegraphics[width=\linewidth]{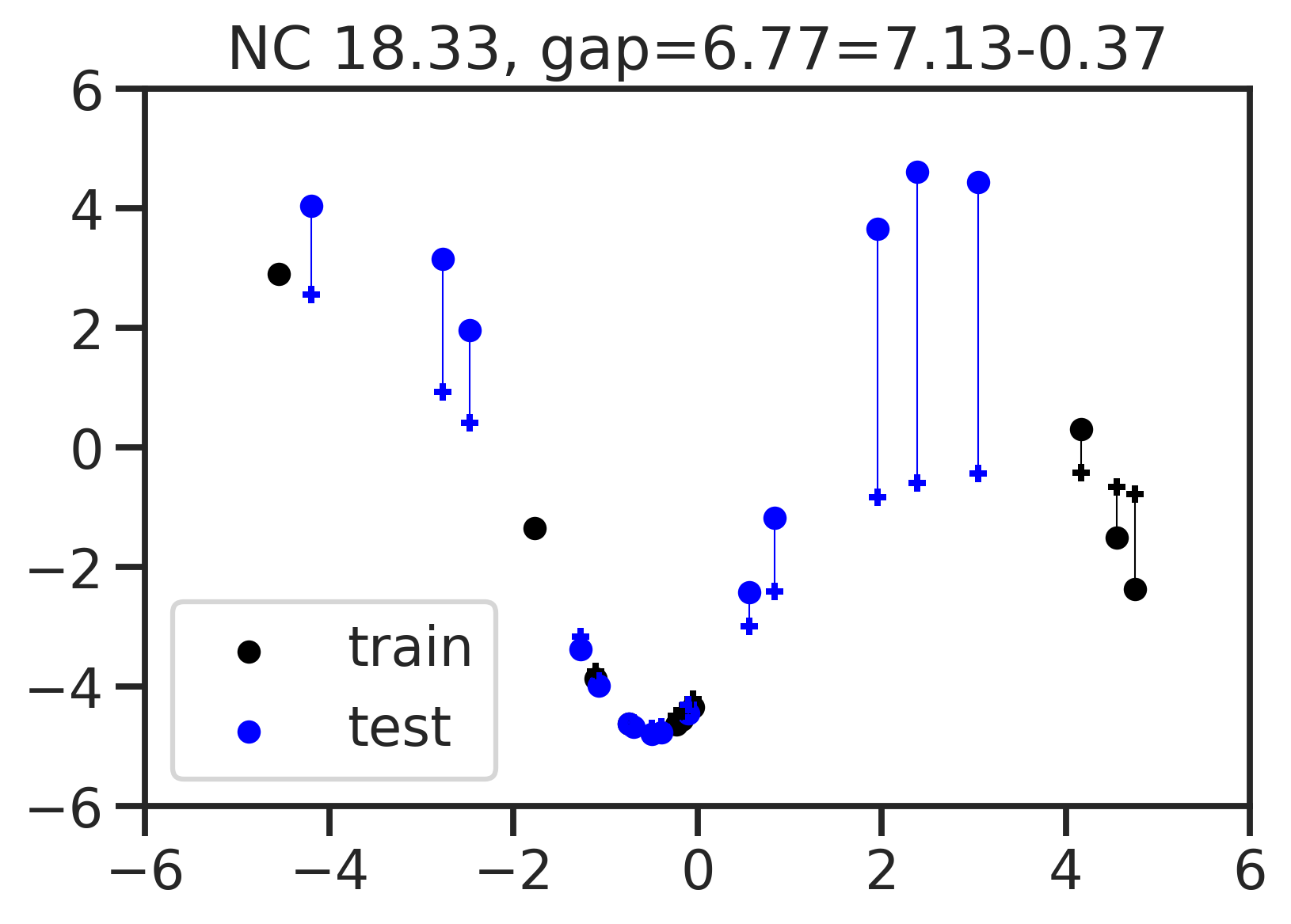}
\end{subfigure} \\
\begin{subfigure}{.33\textwidth}
    \centering
    \includegraphics[width=\linewidth]{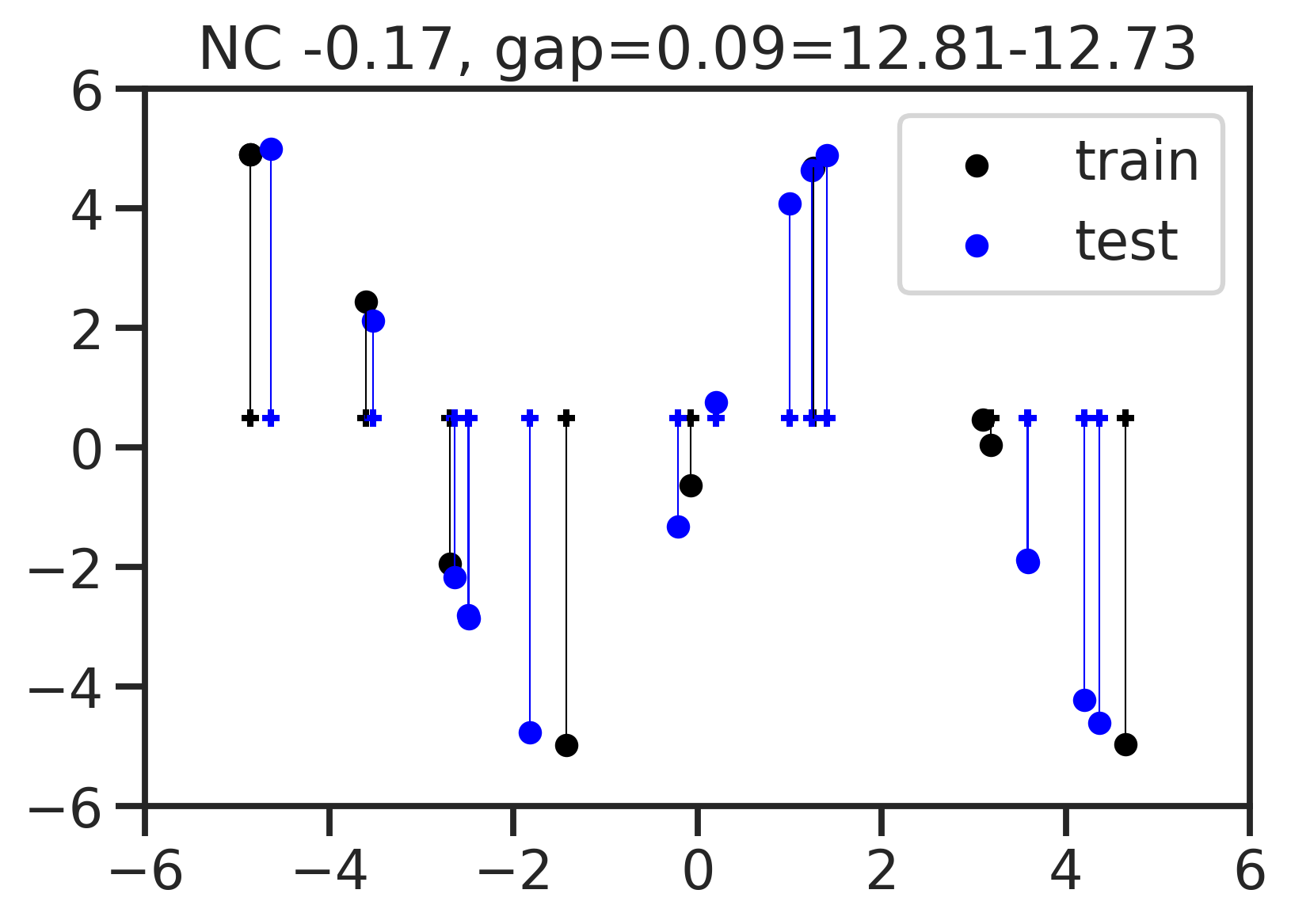}
\end{subfigure}%
\begin{subfigure}{.33\textwidth}
    \centering
    \includegraphics[width=\linewidth]{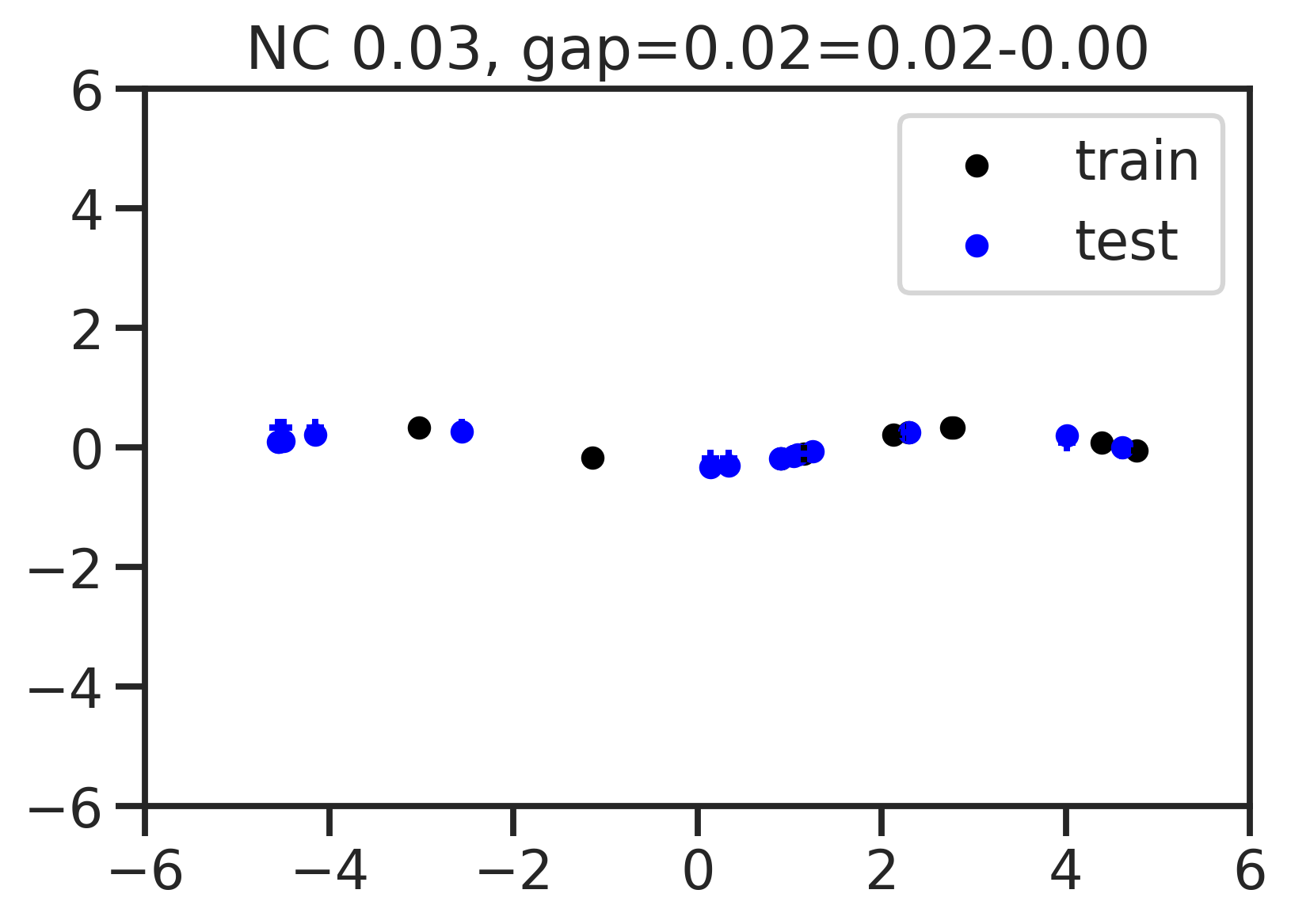}
\end{subfigure}%
\begin{subfigure}{.33\textwidth}
    \centering
    \includegraphics[width=\linewidth]{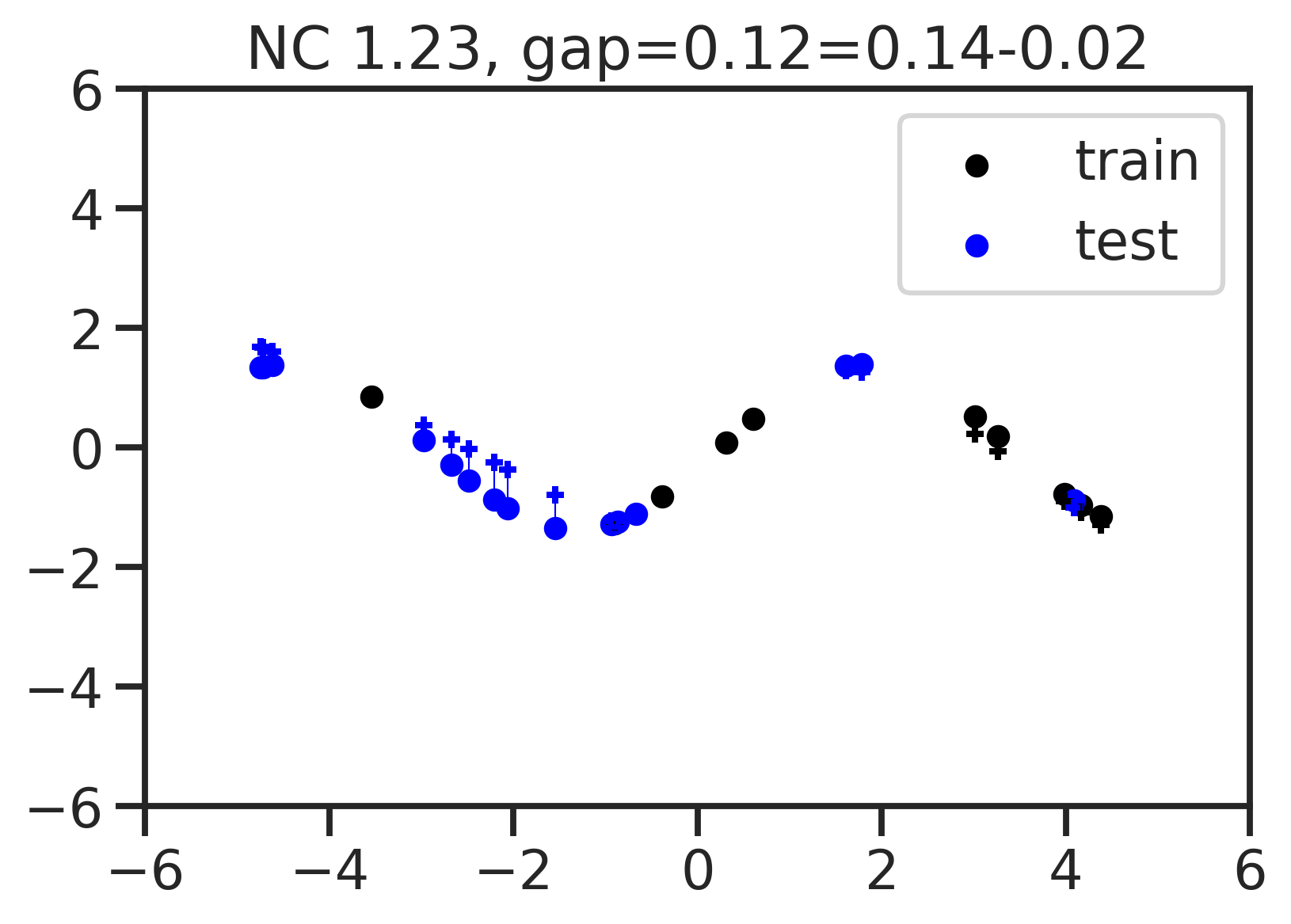}
\end{subfigure}%
\caption{
    Visualization of regression tasks.
    The x and y axes represent inputs and outputs of the task learners, respectively.
    Circles represent the targets and plus signs represent predictions.
    The NC model is trained with a neural network learner, and we evaluated on three different learners: 0-th order polynomial (left), nearest-neighbor (center), and neural networks (right).
}
\label{fig:reg_vis_app}
\end{figure}

\begin{figure} \centering
\begin{subfigure}{.33\textwidth} \centering 
\includegraphics[width=\linewidth]{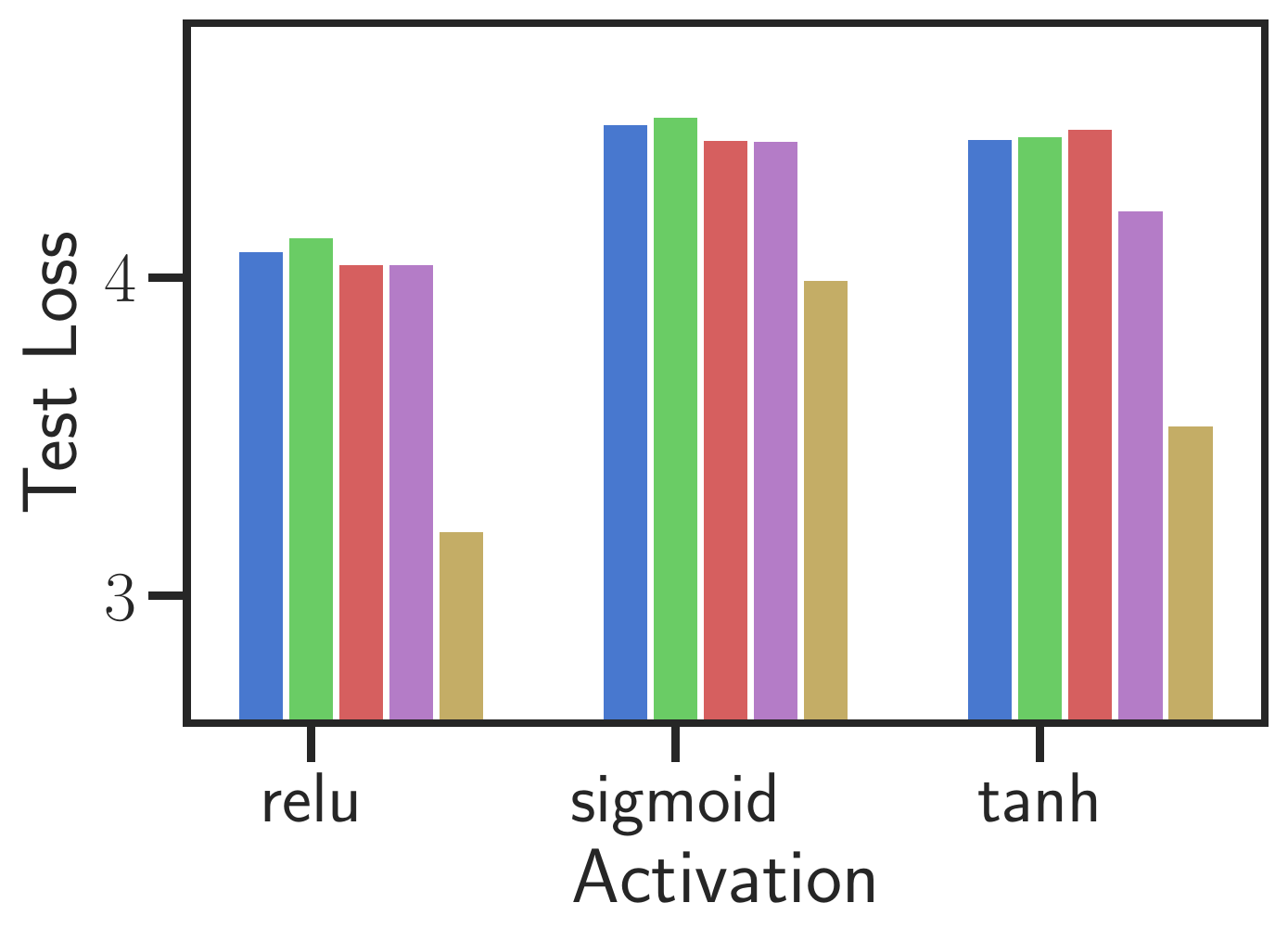}
\end{subfigure}%
\begin{subfigure}{.43\textwidth} \centering 
\includegraphics[width=\linewidth]{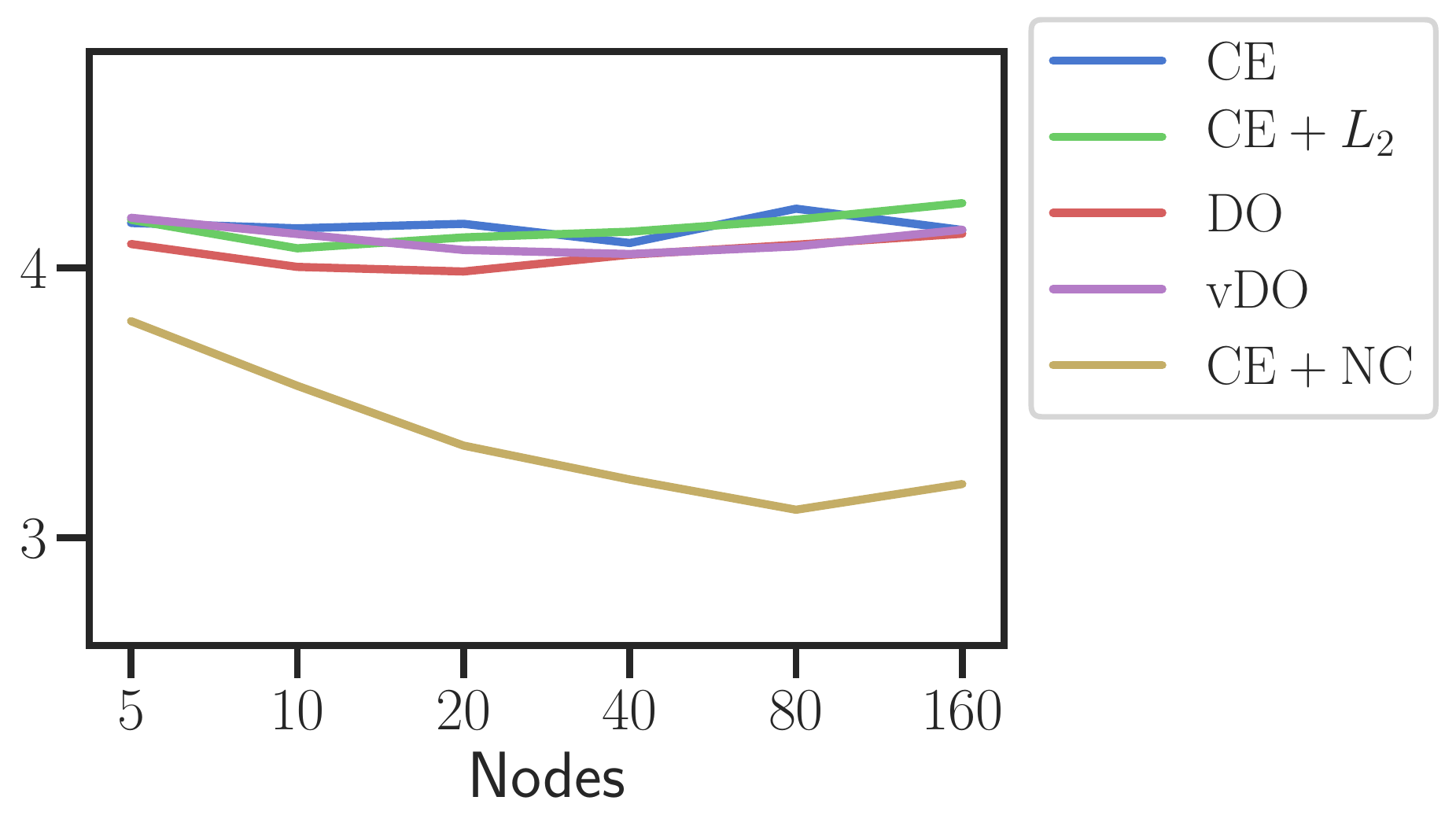}
\end{subfigure}%
\caption{
Additional experiments for out-of-distribution task learners. We additionally compare against Dropout and Variational Dropout.
}
\label{fig:reg_do}
\end{figure}

\begin{figure} \centering
\begin{minipage}{0.3\textwidth} \centering
    \includegraphics[width=.95\linewidth]{figures/3d_1shot/best/3dsurface_TrainLoss} \\
\end{minipage}%
\begin{minipage}{0.3\textwidth} \centering
    \includegraphics[width=.95\linewidth]{figures/3d_1shot/best/3dsurface_Gap}
    \includegraphics[width=.95\linewidth]{figures/3d_1shot/best/3dsurface_NC}
\end{minipage}%
\begin{minipage}{0.3\textwidth} \centering
    \includegraphics[width=.95\linewidth]{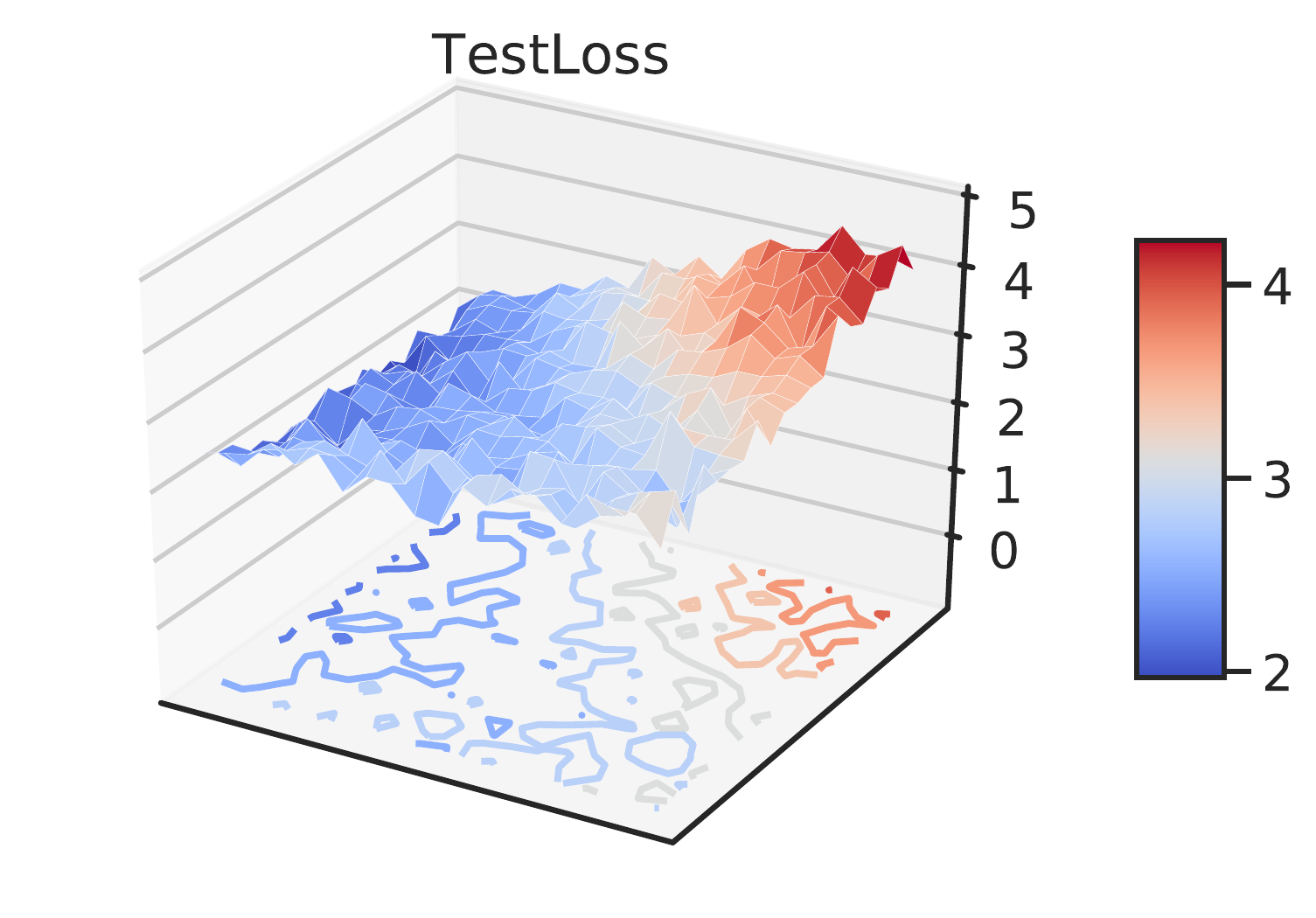}
    \includegraphics[width=.95\linewidth]{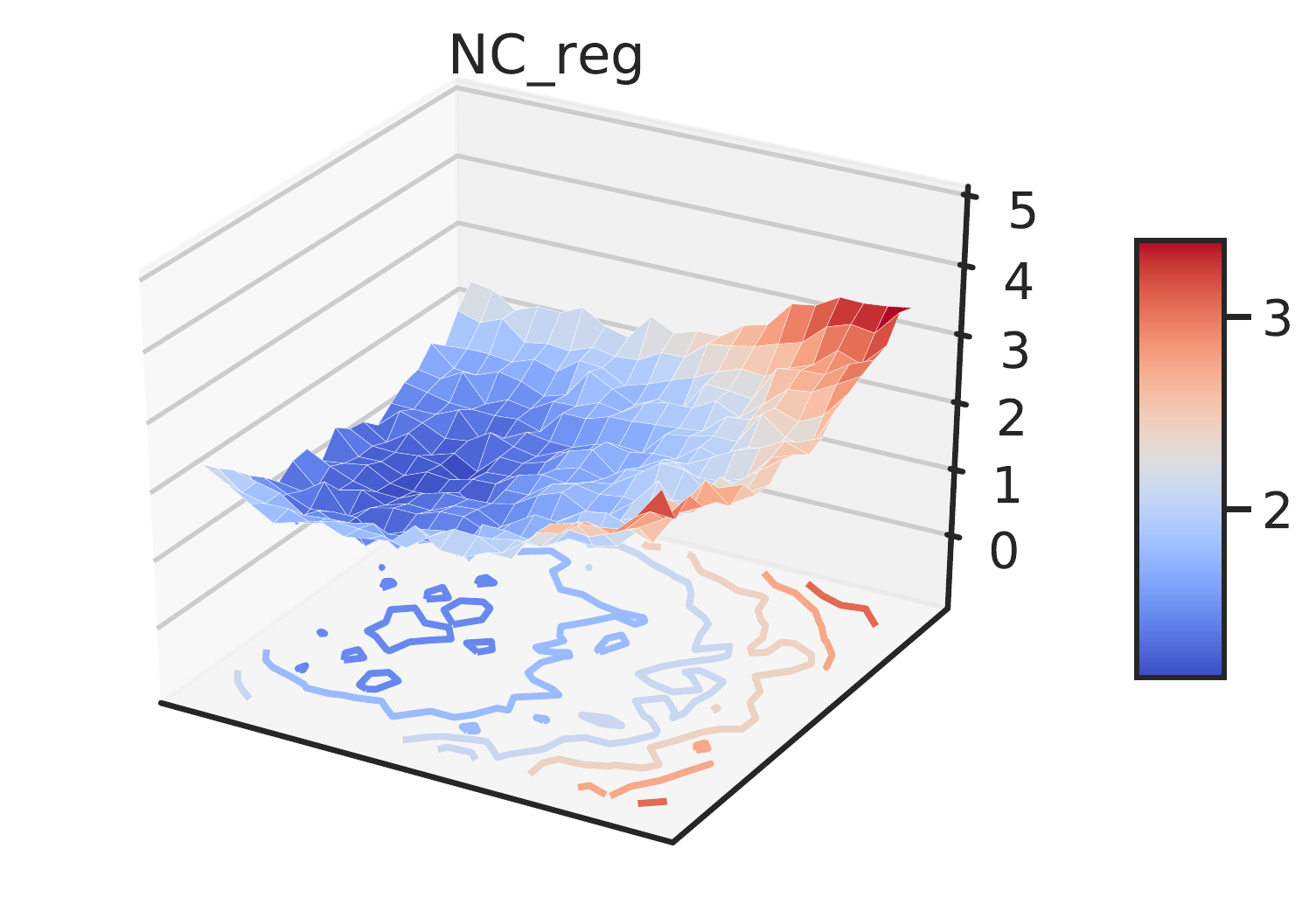}
\end{minipage}%
\caption{Visualization of loss surfaces. Best viewed zoomed in.}
\label{fig:surface_vis_app}
\end{figure}

We evaluated the performance of the following regularizers on the sinusoid regression task:
$L_1$ norm,
$L_2$ norm,
$L_{1, \infty}$ norm,
$L_{3, 1.5}$ norm,
Orthogonal constraint,
Frobenius norm,
Dropout \citep{srivastava2014dropout},
MetaReg \citep{balaji2018metareg},
and \gls{nc}.
We show performance after $\{1, 2, 4, 8, 16\}$ steps.
Results in \cref{tab:sine_regularization} show that all other baselines fail to provide guidance in this task, while \gls{nc} outperforms them by a large margin.

In \cref{fig:reg_vis_app}, we show additional visualizations of regression tasks.
This figure shows that \gls{nc} successfully captures the trend of the generalization gap even in out-of-distribution hypothesis classes.

\cref{fig:reg_do} shows additional experiments, where we additionally compare against stronger baselines (dropout and variational dropout).

\cref{fig:surface_vis_app} shows additional visualizations of loss surfaces, and reveals that the \gls{nc}-regularized loss has similar trends to that of the test loss.

\section{Experimental Details}
All experiments were ran on single GPUs (either Titan V or Titan XP) with the exception of the single-task image classification experiment, which was run on two.

These embeddings are fed into a multi-head attention layer \citep{vaswani2017attention} where queries, keys, and values are 
$Q=e_\mathrm{te}$, 
$K=e_\mathrm{tr}$,
$V=[e_\mathrm{tr}, y_\mathrm{tr}] (\in \Real^{m \times (d+1)})$,
respectively.
The output of this attention layer is a set of $m'$ items, each corresponding to a test datapoint:
\begin{align}
  \label{eq:MHA}
  f_\mathrm{att}(Q, K, V)
  = e_\mathrm{att} \in \Real^{m' \times d}.
\end{align}
Finally, these embeddings are passed through a decoding MLP network and averaged:
\begin{align}
  \mathrm{NC}(X_\mathrm{tr}, X_\mathrm{te}, Y_\mathrm{tr}, h(X_\mathrm{tr}), h(X_\mathrm{te}))
  = \frac{1}{m'} \sum_{i=1}^{m'} f_\mathrm{dec}(e_\mathrm{att})_i \in \Real.
\end{align}

\subsection{Sinusoid Regression}
\paragraph{Task Learner}
The learner was a one-layer MLP network with $40$ hidden units and ReLU activations, and was trained with vanila SGD with a learning rate of $0.01$.

\paragraph{NC Architecture}
Datapoints $x$ are encoded using an MLP encoder with $n_\mathrm{enc}$ layers, $d$-dimensional activations, and ReLU nonlinearities.
The outputs of the encoder are fed into a multi-head attention layer with $d$-dimensional activations.
The outputs of the multi-head attention layer are mean-pooled and fed into an MLP decoder with $n_\mathrm{dec}$ layers, $d$-dimensional activations, and ReLU nonlinearities.
We train \gls{nc} with batch size $\mathrm{bs}$ and the Adam optimizer with learning rate $\mathrm{lr}$.

We considered the following range of hyperparameters:
$n_\mathrm{enc} \in \{1, 2, 3\}$,
$d \in \{128, 256, 512, 1024\}$,
$n_\mathrm{dec} \in \{1, 2, 3\}$,
$\mathrm{bs} \in \{128, 256, 512, 1024\}$,
$\mathrm{lr} \in \{0.005, 0.001, 0.0005, 0.0001\}$.
We tuned these hyperparameters with a random search and ultimately used 
$n_\mathrm{enc} =3$
$d =1024$
$n_\mathrm{dec} =3$
$\mathrm{bs} =512$
$\mathrm{lr} =0.0005$.

\subsection{Classification}
\paragraph{Task Learner}
The task learner was ResNet-18 \citep{he2016deep} for the SVHN and CIFAR-10 datasets,
and an MLP with one hidden layer of $500$ nodes and ReLU nonlinearities.
To isolate the effect of the regularizers, we considered no data augmentation besides whitening.
We train all networks with SGD with a fixed learning rate and no additional learning rate scheduling.
The learning rate was $0.0001$ for ResNet-18 and $0.01$ for the MLP.

\paragraph{NC Architecture}
Datapoints $x$ are encoded using a shared CNN encoder.
The CNN architecture was the $4$-layer convolutional net in \citep{snell2017prototypical} when the task learner was an MLP, and was ResNet-18 otherwise.
We freeze all batch normalization layers inside \gls{nc}.
The outputs for only the train data is fed into a $n_\mathrm{enc}$-layer MLP followed by a stack of $n_\mathrm{self}$ self-attention layers, both with $d$-dimensional activations.
These outputs are processed by a bilinear layer,
and all outputs are fed into a multi-head attention layer with $d$-dimensional activations.
The outputs of the multi-head attention layer are then fed into an MLP decoder with $n_\mathrm{dec}$ layers, $d$-dimensional activations, and ReLU nonlinearities.
We train \gls{nc} with batch size $\mathrm{bs}$ and the Adam optimizer with learning rate $\mathrm{lr}$.

For the MLP learners, we considered the following range of hyperparameters:
$n_\mathrm{enc} \in \{1, 2, 3\}$,
$n_\mathrm{self} \in \{1, 2, 3\}$,
$d \in \{60, 120, 240\}$,
$n_\mathrm{dec} \in \{1, 2, 3\}$,
$\mathrm{bs} \in \{4, 8, 16\}$,
$\mathrm{lr} \in \{0.0005\}$.
We tuned these hyperparameters with a random search and ultimately used 
$n_\mathrm{enc} =1$,
$n_\mathrm{self} =1$,
$d =120$,
$n_\mathrm{dec} =3$,
$\mathrm{bs} =16$,
$\mathrm{lr} =0.0005$.

For the ResNet-18 learners, we considered the following range of hyperparameters:
$n_\mathrm{enc} \in \{1, 2, 3\}$,
$n_\mathrm{self} \in \{1, 2, 3\}$,
$d \in \{200, 400, 800, 1600\}$,
$n_\mathrm{dec} \in \{1, 2, 3\}$,
$\mathrm{bs} \in \{2, 4, 8\}$,
$\mathrm{lr} \in \{0.0005\}$.
We tuned these hyperparameters with a random search and ultimately used 
$n_\mathrm{enc} =1$,
$n_\mathrm{self} =3$,
$d =400$,
$n_\mathrm{dec} =3$,
$\mathrm{bs} =4$,
$\mathrm{lr} =0.0005$.

\paragraph{Single-task Experiment Details}
We provide further details about the single-task experiments.
The datasets we considered had either $50000$ or $60000$ training datapoints.
We constructed learning tasks from such training sets by sampling $40000$ "training" datapoints and $10000$ validation datapoints.
Using such splits, we trained \gls{nc} as usual.
To scale to long learning trajectories, we trained \gls{nc} using one process, 
while simultaneously adding trajectories from a separate process on a separate GPU which only ran task learners regularized by the \gls{nc} model.
During final evaluation, we clipped \gls{nc} estimates below $-0.1$, which has the effect of ignoring \gls{nc} when it is overconfident about generalization.
We found that such clipping is critical for performance on long training runs.
\end{document}